\newtheorem{theorem}{Theorem}
\title{Practical Global and Local Bounds in  Gaussian Process Regression via Chaining}
\author {
    % Authors
    Junyi Liu\textsuperscript{\rm 1},
    Stanley Kok\textsuperscript{\rm 1}
    % Third Author Name\textsuperscript{\rm 1}
}
\begin{document}
\maketitle
\begin{abstract}
Gaussian process regression (GPR) is a popular nonparametric Bayesian method that provides predictive uncertainty estimates and is widely used in safety-critical applications. While prior research has introduced various uncertainty bounds, most existing approaches require access to specific input features, and rely on posterior mean and variance estimates or the tuning of hyperparameters. These limitations hinder robustness and fail to capture the model’s global behavior in expectation.
To address these limitations, we propose a chaining-based framework for estimating upper and lower bounds on the expected extreme values over unseen data, without requiring access to specific input features. We provide kernel-specific refinements for commonly used kernels such as RBF and Matérn, in which our bounds are tighter than generic constructions. We further improve numerical tightness by avoiding analytical relaxations. 
In addition to global estimation, we also develop a novel method for local uncertainty quantification at specified inputs. This approach leverages chaining geometry through partition diameters, adapting to local structures without relying on posterior variance scaling. 
Our experimental results validate the theoretical findings and demonstrate that our method outperforms existing approaches on both synthetic and real-world datasets. 
%The code is available at https://github.com/Liu-Jun-Yi/Chaining-Bounds-in-GPR.
\end{abstract}
% Uncomment the following to link to your code, datasets, an extended version or similar.
% You must keep this block between (not within) the abstract and the main body of the paper.
\begin{links}
    \link{Code}{https://github.com/Liu-Jun-Yi/Chaining-Bounds-in-GPR}
    % \link{Datasets}{https://aaai.org/example/datasets}
    % \link{Extended version}{https://aaai.org/example/extended-version}
\end{links}

\section{Introduction}

Gaussian process regression (GPR), a flexible nonparametric Bayesian method, has emerged as a powerful tool for learning-based control, offering predictive uncertainty estimates through its posterior distribution. Classical works such as \citet{wu1993local} and \citet{schaback1999improved} provide deterministic error bounds in noiseless settings using reproducing kernel Hilbert space (RKHS) theory and fill-distance metrics. In contrast, the frequentist literature focuses on uncertainty bounds under noise, including time-uniform bounds based on information gain and RKHS norms for sequential decision tasks \citep{srinivas2010gaussian, srinivas2012information, chowdhury2017kernelized, whitehouse2023sublinear}.

In safety-critical domains like autonomous driving and robotics \citep{berkenkamp2019safe}, rigorous uncertainty quantification is essential. Although GPR is often used in these settings, existing methods offer limited theoretical guarantees. One approach combines posterior variance with robust control to construct uncertainty intervals. To improve robustness under model mismatch, recent works introduce probabilistic bounds via Lipschitz assumptions \citep{lederer2019uniform}, model-aware terms \citep{fiedler2021practical}, or hyperparameter variation \citep{capone2022gaussian}. Others calibrate posterior errors or use conformal prediction \citep{capone2023sharp, papadopoulos2024guaranteed}.

Despite this progress, most existing approaches aim to control errors or construct uncertainty intervals at specified input locations. These bounds typically require access to specific test input features, and rely on posterior mean and variance or intensive hyperparameter tuning, limiting their adaptability and often resulting in conservative or unstable intervals. 
Moreover, they do not directly address global behavior, such as the expected maximum bound, which is critical for assessing model reliability in safety-critical settings. For instance, autonomous systems must ensure trajectory deviations remain within safety thresholds over time, while disaster and financial risk models focus on expected peaks, such as the highest flood level or the worst market loss.

To address these limitations, we propose a novel framework that directly controls the global behavior of Gaussian processes without requiring access to specific input features. Specifically, we derive an expected upper bound on the global maximum of the process, which, by symmetry, also provides a corresponding lower bound. To our knowledge, this is the first application of chaining-based bounds in the context of GPR.

In practical applications, loose bounds can lead to inefficient decisions and resource waste, motivating the need for tighter estimates. We tighten the chaining bounds by exploiting properties of kernels, such as RBF and Matérn, yielding provably tighter and more practical bounds. Our implementation avoids analytical relaxations, such as loose constant factors, that are typically introduced for mathematical convenience but often result in overly conservative estimates.

While our method captures global behavior, we also propose a complementary method to quantify uncertainty at specific input features. 
Instead of relying on chaining accumulation, this method is inspired by chaining and leverages geometric partitioning to define locally adaptive reference sets. The uncertainty bounds are then computed by scaling the diameters of these partitions, aligning with the local geometry of the input space, and avoiding direct reliance on posterior mean and variance scaling.

\section{Background}
\subsection{Gaussian Process Regression}

Gaussian Process Regression (GPR) is a non-parametric Bayesian approach to regression~\citep{williams2006gaussian}. A Gaussian process (GP) is a collection of random variables \(\{f(x)\}_{x \in T}\), where any finite subset follows a multivariate normal distribution. We write \(f \sim \mathcal{GP}(m, K)\) to indicate that \(f\) is drawn from a GP with mean function \(m(x)\) and covariance function \(K(x, x')\). The covariance function defines dependencies between inputs and is typically chosen as the RBF or Matérn kernel. For simplicity, the mean function is often assumed zero.

\subsection{Chaining} \label{section22}

Chaining is a mathematical technique consisting of a succession of steps that provide successive approximations of an index space \( (T, d) \), where \( T \) is an index set or input set, and \(d\) is a metric on \(T\). Its fundamental idea is to group variables \(X_t\) (or, equivalently, their corresponding indices) that are nearly identical and approximate them at successive levels of granularity \citep{talagrand2014upper}. By doing this, we achieve tighter bounds, especially in cases where many variables are similar \citep{asadi2020chaining}. This approach mitigates the risk of large errors that can arise from such correlations.
%
% \begin{figure}[ht]
% \vskip 0.1in
% \begin{center}
% \centerline{\includegraphics[width=0.8\columnwidth]{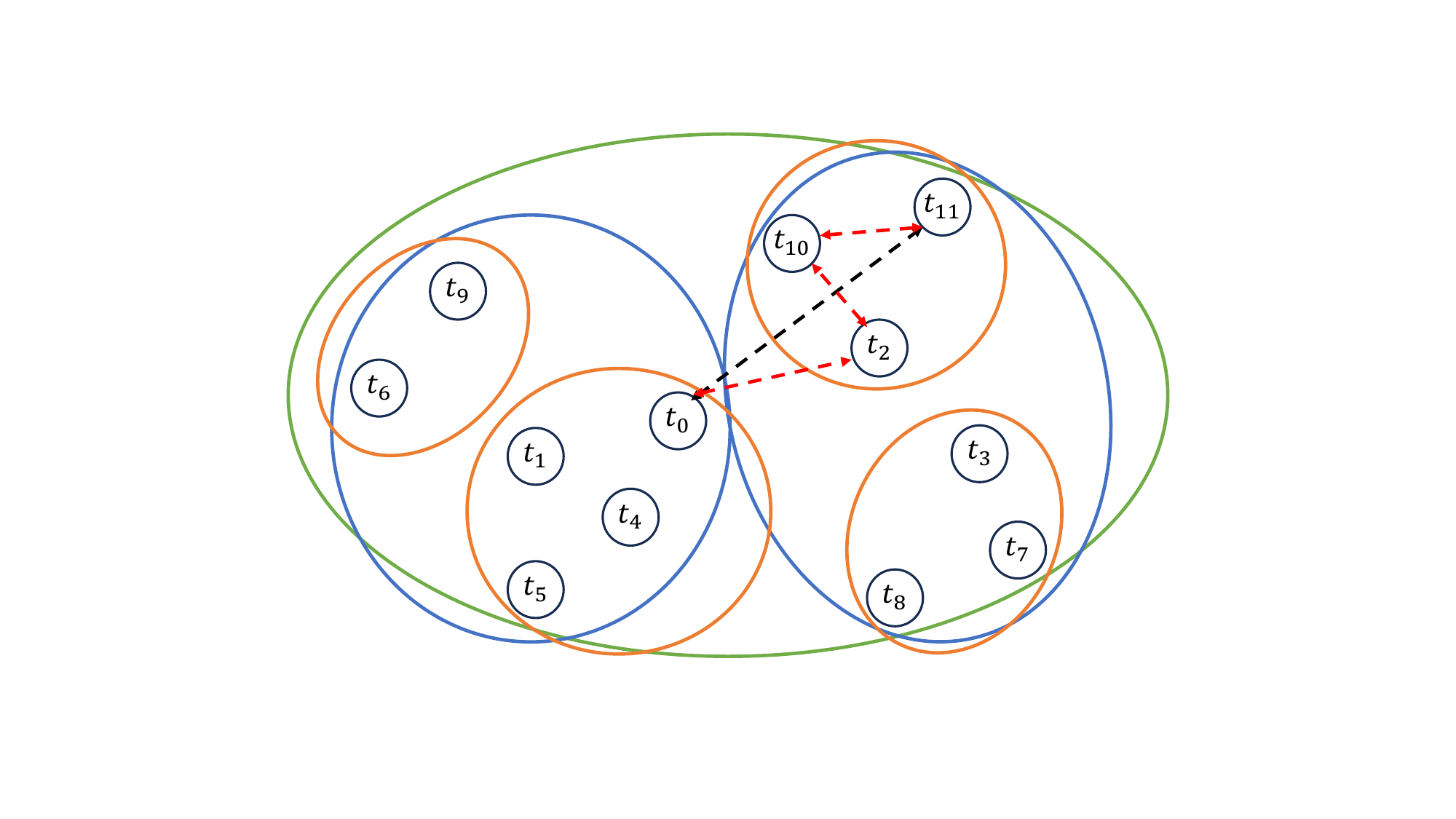}}
% \caption{An Example of Why Chaining Helps with Supremum.}
% \label{example1}
% \end{center}
% \vskip -0.3in
% \end{figure}

To illustrate, consider a stochastic process \((X_t)_{t \in T}\), in which the difference between \(X_n\) and \(X_{0}\) is expressed as \(X_n - X_{0} = \sum_{t = 1}^{n} \left( X_t - X_{t-1} \right)\). 
When many variables \(X_t\) in \(T\) are nearly identical, strong correlations between them can obscure the true variation in the process. Grouping similar variables together helps reduce this redundancy by allowing us to approximate these highly correlated variables with a representative value, thereby simplifying the analysis and making the process easier to interpret and work with. A more detailed explanation can be found in Appendix~A.1.

For \(n \geq 0\), we select a subset \(T_n\), and for each \(t \in T\), we choose an approximation \(\pi_n(t)\) from \(T_n\). 
Using these \(\{\pi_n(t)\}\) points, we obtain the corresponding \(\{X_{\pi_n(t)}\}\) variables, which serve as successive approximations of \(\{X_t\}\). We start by assuming that \(T_0\) contains only one element \(t_0\), and thus \(\pi_0(t) = t_0\) for all \(t \in T\). The core relation is:
\( X_t - X_{t_0} = \sum_{n \ge 1} \left(X_{\pi_n(t)} - X_{\pi_{n-1}(t)}\right).\) 

This equality holds because, for sufficiently large \(n\), \(\pi_n(t)\) equals \(t\), meaning that beyond a certain point, the approximation stops, and the series becomes a finite sum. Specifically, as \(n\) increases, the sets \(T_n\) become progressively finer, eventually covering all points in \(T\). Once \(T_n\) contains \(t\), we have \(\pi_n(t) = t\), so no new information is added by further terms in the series. As a result, the infinite series truncates to a finite sum. This ensures convergence in practical settings where the process \(X_t\) is fully captured after a finite number of terms. The efficacy of this approach is rooted in the fact that for each approximation \(\pi_i(t)\), the variables \(X_t - X_{\pi_i(t)}\) are smaller than \(X_t - X_{t_0}\), making their supremum easier to handle. 
We present a simple example in Appendix~A.2.

This stepwise refinement converts the intractable global bound estimation into manageable local problems, simplifying the overall calculation, thus avoiding the complexity and error accumulation associated with global estimation.

\section{Related Work}
\label{related-work}

The concept of bounds in GPR originates from the Bayesian confidence intervals, which assume the target function is drawn from a Gaussian process prior
% , reflect posterior uncertainty, 
and are widely used in traditional GPR setups to reflect posterior uncertainty.

In contrast, the frequentist literature develops high-probability uncertainty bounds, which do not assume a prior but instead provide guarantees over repeated sampling. For example, \citet{srinivas2010gaussian}, \citet{srinivas2012information}, and \citet{chowdhury2017kernelized} derive time-uniform bounds on the prediction error by leveraging information gain and RKHS norms. These bounds are often used in Bayesian optimization and bandit settings. Similarly, \citet{fiedler2021practical} refine these bounds by introducing a model-misspecification-aware error term, still within a frequentist framework.
A different class of results involves deterministic error bounds from the field of scattered data approximation. Classical works such as \citet{wu1993local} and \citet{schaback1999improved}, as summarized in \citet{wendland2004scattered}, provide interpolation-based error bounds using fill distance and smoothness assumptions. These results are fundamentally non-probabilistic, rely on properties of the function space (e.g., RKHS regularity), and assume noiseless observations and the absence of distributional randomness.

Recent approaches adopt a Bayesian-style probabilistic setup. 
\citet{lederer2019uniform} introduce probabilistic bounds using Lipschitz continuity, which are derived on a finite grid and extended via covering arguments. Although their presentation is probabilistic, it deviates from the strict frequentist setting of time-uniform guarantees. Similarly, \citet{capone2022gaussian} design probabilistic error bounds based on a given hyperparameter range, aiming to mitigate the risk from inaccurate kernel choices.

Later, \citet{song2019distribution} propose distribution calibration by adjusting predictive distributions post-hoc using Gaussian processes and Beta calibration. \citet{capone2023sharp} introduce a posterior adjustment that sharpens GP intervals via variance calibration but lacks distribution-free coverage. More recently, \citet{papadopoulos2024guaranteed} develops conformal prediction to construct distribution-free intervals based on nonconformity scores.

These methods bound uncertainty pointwise but not global deviations. 
Our work advances GPR bounds in two directions. First, we implement Talagrand’s generic chaining bound to estimate the global expected supremum error, capturing expected global deviation rather than input-specific guarantees. While existing frequentist bounds upper-bound this supremum by maximizing over uncertainty intervals at given inputs, our method offers a theoretical alternative that targets the expected supremum. Second, we propose a new method for uncertainty quantification at specified inputs; Inspired by chaining geometry, it uses partitions to construct locally adaptive bounds around each input, reducing reliance on global posterior variance. 
Our methods jointly extend GPR bounds from local to global uncertainty control.

Chaining-based methods have recently gained attention in machine learning through Chaining Mutual Information (CMI)~\cite{asadi2018chaining}, which has been used to bound generalization error~\cite{clerico2022chained} and establish risk bounds for neural networks via hierarchical coverings~\cite{asadi2020chaining}. 
For GPR, we instead apply chaining directly to covariance functions, which naturally encode the process structure.

\section{Upper and Lower Bounds}

We now present our technical contributions. 
We avoid posterior variance scaling compared to existing methods and apply chaining directly to the distance metric induced by the kernel function. While GPR typically fits data and selects kernel hyperparameters, we use the kernel solely to define a distance metric, bypassing posterior inference. This eliminates dependence on posterior variance and calibration, yielding tighter, more reliable bounds.
We assume a bounded RKHS function and i.i.d. sub-Gaussian noise.

To formalize our analysis, we consider a Gaussian process \((X_t)_{t \in T}\), where each \(X_t\) follows a normal distribution with mean zero and variance \(\sigma^2\), and \(T\) is an index set (e.g., \(T \subseteq \mathbb{R}^n\)). 
For any two points \(s, t \in T\), the process satisfies \(\mathbb{E}[(X_s - X_t)^2] = d(s, t)^2\) and tail bound \(\mathbb{P}(|X_s - X_t| \geq u) \leq 2 \exp\left( -\frac{u^2}{2 d(s, t)^2} \right)\), where \(d(s, t)\) is the canonical pseudometric induced by the covariance function. Under this setting, we study two objectives: (i) estimating an expected upper bound
%\(\mathbb{E} \sup_{t \in T} X_t\)
, and (ii) providing pointwise uncertainty bounds. 
We first introduce the expected upper bound, starting from Talagrand’s theorem~\citep{talagrand2014upper}.
\begin{theorem}\label{thm:tala} \citep{talagrand2014upper} (Eq 2.33)
Let \(T\) be an index set, \(t_0 \in T\) be an initial index,  \(T_n \subseteq T\) for \(n \ge 0\), and \(T_0 = \{t_0\} \). For each \(t \in T\), let  \(\pi_n(t) \in T_n\) for each \(n \geq 0\), where each \(\pi_n(t)\) represents a successive approximation of \(t\), and let \(\pi_n(t) = t \) for sufficiently large \(n\). 
Then
\begin{align*}
P\left( \sup_{t \in T} |X_t - X_{t_0}| > uS \right) \leq L \exp\left( -\frac{u^2}{2} \right), 
\end{align*}
where \(L\) is a universal constant (e.g., \( L = \exp(9/2) \) satisfies the condition), \(u \in \mathbb{R} \cup \{0\}\), \(d\!:\!T\!\times\!T\!\rightarrow\!\mathbb{R}\) is a distance metric on \(T\), and 
\begin{align*}
S := \sup_{t \in T} \sum_{n \geq 1} 2^{n/2} d(\pi_n(t), \pi_{n-1}(t)).
\end{align*}
\end{theorem}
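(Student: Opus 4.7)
The plan is to use the chaining telescoping identity to convert the tail of the global supremum into a union bound over local increments along the hierarchy $\{T_n\}_{n \geq 0}$. First I would write the finite-sum decomposition $X_t - X_{t_0} = \sum_{n \geq 1}(X_{\pi_n(t)} - X_{\pi_{n-1}(t)})$, which terminates because $\pi_n(t) = t$ eventually, so that $|X_t - X_{t_0}| \leq \sum_{n \geq 1} |X_{\pi_n(t)} - X_{\pi_{n-1}(t)}|$ for every $t \in T$.

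Second, I would introduce the favorable event $B_n := \{\, \forall (s, t') \in T_n \times T_{n-1} :\ |X_s - X_{t'}| \leq u \cdot 2^{n/2}\, d(s, t') \,\}$. On $\bigcap_{n \geq 1} B_n$ the decomposition yields $|X_t - X_{t_0}| \leq u \sum_{n \geq 1} 2^{n/2} d(\pi_n(t), \pi_{n-1}(t)) \leq uS$ uniformly in $t$, so the target probability is bounded by $\sum_{n \geq 1} P(B_n^c)$. For any single pair the sub-Gaussian tail assumption gives $P(|X_s - X_{t'}| > u \cdot 2^{n/2} d(s, t')) \leq 2 \exp(-u^2 2^n / 2)$, and a union bound over the at most $|T_n| |T_{n-1}|$ distinct pairs at level $n$ controls $P(B_n^c)$.

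The main obstacle is making the resulting series in $n$ geometric, so that its sum has the shape $L \exp(-u^2/2)$. Here I would invoke the admissibility condition $|T_n| \leq 2^{2^n}$ built into Talagrand's framework, which gives $|T_n||T_{n-1}| \leq \exp\!\bigl((3/2) \cdot 2^n \log 2\bigr)$, whence
\begin{align*}
P\!\left( \sup_{t \in T} |X_t - X_{t_0}| > u S \right) \leq 2 \sum_{n \geq 1} \exp\!\left( -2^n \bigl( \tfrac{u^2}{2} - \tfrac{3 \log 2}{2} \bigr) \right).
\end{align*}
Once $u^2 \geq 6 \log 2$ each term is at most half its predecessor, so the sum is dominated by twice its first term ($16 \exp(-u^2)$) and is comfortably bounded by $\exp(9/2) \exp(-u^2/2)$; for smaller $u$, one has $\exp(9/2) \exp(-u^2/2) \geq 1$ already, so the inequality holds trivially since any probability is at most one. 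Splicing the two regimes then yields the stated bound with $L = \exp(9/2)$.
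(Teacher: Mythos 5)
Your proposal is correct and follows essentially the same route as the paper's proof: the telescoping chaining decomposition, the favorable events $\Omega_{u,n}$ with a union bound over the at most $|T_n||T_{n-1}|$ pairs, geometric summation of the resulting series for large $u$, and the trivial bound $p(u)\leq 1\leq L\exp(-u^2/2)$ for small $u$. The only cosmetic differences are your slightly tighter cardinality bound $2^{2^n+2^{n-1}}$ in place of the paper's $2^{2^{n+1}}$ and the threshold $u^2\geq 6\log 2$ in place of $u\geq 3$; both yield the same constant $L=\exp(9/2)$.
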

% It is important to highlight that 
Theorem~\ref{thm:tala} is purely theoretical, lacking practical implementation details. For instance, no method is provided for determining $S$,
\(\{T_n\}_{n \ge 0}\), \(\{\pi_n(t)\}_{n \ge 0}\), and \(t_0\). We address some of these deficiencies in subsequent sections. A generic bound that applies to all kernels is presented below.
\begin{theorem} \label{eqn:expsup}
(Talagrand's Generic Bound) Theorem~\ref{thm:tala}, combined with the formula 
\( \mathbb{E} \left[ Y \right] = \int_0^\infty P(Y \geq u) \, du , \)
which gives the expectation of a non-negative random variable, leads to the following upper bound:
\begin{align*}
\mathbb{E} \sup_{t \in T} X_t \leq  X_{t_0} + \mathbb{E}\left[ \sup_{t \in T} |X_t - X_{t_0}| \right] \\
\leq 
X_{t_0} + (1 + \sqrt{2})
L \sup_{t \in T} \sum_{n \geq 0} 2^{n/2}  d(t, T_n)
,
\end{align*}
where  $\mbox{$d(t, T_n)=\inf_{s \in T_n} \sqrt {K(t, t) + K(s, s) - 2K(t, s)}$}$
and \( t_0 \) is chosen such that \( X_{t_0} \) is close to zero due to the zero-mean property and the symmetry of the kernel.
\end{theorem}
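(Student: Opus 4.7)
The plan is to peel off the two inequalities in sequence. The first inequality is elementary: writing $X_t = X_{t_0} + (X_t - X_{t_0}) \leq X_{t_0} + |X_t - X_{t_0}|$ and taking the supremum over $t \in T$ gives $\sup_t X_t \leq X_{t_0} + \sup_t |X_t - X_{t_0}|$. Taking expectations (and noting that $X_{t_0}$ does not depend on $t$) yields the first inequality. All the work is then concentrated in bounding $\mathbb{E} \sup_t |X_t - X_{t_0}|$ by the chaining functional on the right, combining Theorem~\ref{thm:tala} with the integral identity for non-negative random variables.

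Next, I would apply the integration identity to $Y = \sup_t |X_t - X_{t_0}|$. After the change of variables $v = uS$, Theorem~\ref{thm:tala} yields
\begin{align*}
\mathbb{E}[Y] \;=\; \int_0^\infty P(Y > v)\, dv \;=\; S \int_0^\infty P(Y > uS)\, du \;\leq\; S \int_0^\infty L\, e^{-u^2/2}\, du,
\end{align*}
where the last step uses the tail bound pointwise for every $u \geq 0$ (the bound is trivially valid when its right-hand side exceeds $1$). The Gaussian integral contributes a universal numerical factor that I would absorb into $L$; one can tighten this by splitting at $u^{\star} = \sqrt{2 \ln L}$ and using Mills' inequality, but the essential conclusion is $\mathbb{E}[Y] \leq L\, S$ up to an absolute constant.

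Finally, I would convert $S = \sup_t \sum_{n \geq 1} 2^{n/2} d(\pi_n(t), \pi_{n-1}(t))$ into the stated form in terms of $d(t, T_n)$. Choosing $\pi_n(t)$ to be a nearest point in $T_n$ to $t$, so that $d(t, \pi_n(t)) = d(t, T_n)$, the triangle inequality gives $d(\pi_n(t), \pi_{n-1}(t)) \leq d(t, T_n) + d(t, T_{n-1})$. Splitting the sum and re-indexing the second part with $m = n-1$ produces
\begin{align*}
\sum_{n \geq 1} 2^{n/2} d(\pi_n(t), \pi_{n-1}(t)) &\leq \sum_{n \geq 1} 2^{n/2} d(t, T_n) + \sqrt{2} \sum_{m \geq 0} 2^{m/2} d(t, T_m) \\
&\leq (1 + \sqrt{2}) \sum_{n \geq 0} 2^{n/2} d(t, T_n).
\end{align*}
Taking the supremum over $t$ and combining with the previous step yields the claimed bound. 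The explicit form $d(s,t) = \sqrt{K(s,s) + K(t,t) - 2K(s,t)}$ is simply the canonical pseudometric induced by the covariance, since $\mathbb{E}[(X_s - X_t)^2] = K(s,s) + K(t,t) - 2K(s,t)$ for a zero-mean Gaussian process. The only non-routine step is the triangle-inequality argument and the re-indexing that surfaces the factor $(1 + \sqrt{2})$; the first inequality and the integration step are standard once Theorem~\ref{thm:tala} is in hand.
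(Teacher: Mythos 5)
Your proposal is correct and follows essentially the same route as the paper's proof in Appendix~B.2: the elementary first inequality via centering, the integration identity $\mathbb{E}[Y]=\int_0^\infty P(Y\geq u)\,du$ with the change of variables that extracts the factor $S$ and the Gaussian integral $\sqrt{\pi/2}$ (absorbed into $L$), and finally the nearest-point choice of $\pi_n(t)$ with the triangle inequality and re-indexing to produce the $(1+\sqrt{2})$ constant. The only cosmetic difference is your remark about tightening the tail integral via Mills' inequality, which the paper defers to its later Theorem~\ref{thm4} rather than using here.
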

\begin{proof}[Proof sketch.] 
We combine theorem~\ref{thm:tala} with the integral formula for non-negative expectations to get the upper bound \( \mathbb{E}[\sup_{t \in T} |X_t - X_{t_0}|] \leq LS \sqrt{\pi/2} \). The approximation \( d(t, \pi_n(t)) = \inf_{s \in T_n} d(t, s) \) and the triangle inequality control chaining increments.
Detailed proofs of Theorems ~\ref{thm:tala} and ~\ref{eqn:expsup} are in Appendices ~B.1 and ~B.2.
\end{proof} 
Unlike Talagrand’s purely theoretical formulation, we provide a concrete and fully implementable version of the chaining procedure, including explicit constructions, pseudocode, and runnable code. In addition, rather than relying on arbitrary constants sufficient for theoretical validity, we explicitly control quantities such as prefactors (e.g., $L$) to obtain tighter and more practically useful bounds, thereby enhancing both rigor and applicability.

 We further apply the general bounds to compute tighter bounds for specific kernels by deriving more precise estimates of \(\mathbb{E}\left[ \sup_{t \in T} |X_t - X_{t_0}| \right]\). The following subsections will first introduce the RBF and Matérn kernels, and then provide detailed proofs for their respective tighter bounds.

\subsection{Kernels}

In GPR, the distance between two input points is measured using a kernel, or covariance function, which quantifies their similarity in the feature space and defines the structure of the Gaussian process by controlling its smoothness and generalization ability. A widely used example is the radial basis function (RBF) kernel. It produces smooth, continuous estimates and is often combined with a constant kernel to model signal variance. It is defined as:
\[
K(s, t) = \sigma^2 \exp\left( -\frac{\|s - t\|^2}{2l^2} \right),
\]
where \( \|s - t\| \) is the Euclidean distance between the (multi-dimensional) input points \( s \) and \( t \), the \( \sigma^2 \) term represents the constant kernel, and \( l \) is the length-scale parameter.

The Matérn kernel is also a widely used covariance function that controls the smoothness of sampled functions through its parameter \( \nu > 0 \). It is defined as  
\(
K(s, t) = \frac{2^{1-\nu}}{\Gamma(\nu)} \left( \frac{\sqrt{2\nu} \|s - t\|}{l} \right)^{\nu} B_{\nu}\left( \frac{\sqrt{2\nu} \|s - t\|}{l} \right),
\)  
where \( l \) is the length scale, and \( B_{\nu} \) is the modified Bessel function of the second kind. Larger values of \( \nu \) imply smoother sample paths.  

When \( \nu = p + 1/2 \) with \( p \in \mathbb{N} \), the kernel simplifies to a product of an exponential and a polynomial of order \( p \)~\cite{seeger2004gaussian}. Commonly \( \nu = 3/2 \), giving  
\[
K(s, t) = \left(1 + \frac{\sqrt{3} \|s - t\|}{l} \right) \exp\left( -\frac{\sqrt{3} \|s - t\|}{l} \right).
\]
The distance between two points \( s \) and \( t \) in the context of GPs is \mbox{\( d(s, t) = \sqrt{\mathbb{E}[(X_s - X_t)^2]}\)},
where \( X_s \) and \( X_t \) are the values at points \( s \) and \( t \) respectively. This distance metric is derived from the covariance function \( K(s, t) \), which describes the covariance between the random variables \( X_s \) and \( X_t \). Specifically, it can be expanded as:
\begin{align*} \label{distance}
    d(s, t)^2 = \mathbb{E}[(X_s - X_t)^2] 
    =K(s, s) + K(t, t) - 2K(s, t).
\end{align*}
It is worth noting that \( s \) and \( t \) can each represent a vector describing a (multi-dimensional) input in a feature space, with \( X_s \) and \( X_t \) corresponding to the outputs evaluated at those input vectors. In this case, the covariance function \( K(s, t) \) reflects how similar the outputs are given their respective input vectors \( s \) and \( t \). 

\subsection{Tighter Bounds}
We will now show in Theorem~\ref{thm4}  how to refine the upper bound on \(\mathbb{E}\left[ \sup_{t \in T} |X_t - X_{t_0}| \right]\), yielding a tighter and more practical result for Gaussian processes with RBF kernels, compared to the bound in Theorem~\ref{eqn:expsup}. 

\begin{theorem}\label{thm2} (Tighter RBF Bound)
Consider a Gaussian process \((X_t)_{t \in T}\) with a radial basis function (RBF) kernel \(K(s, t) = \sigma^2 \exp\left( -\frac{\|s - t\|^2}{2l^2} \right)\), where \(T\) is an input/index set, \( \|s - t\| \) is the Euclidean distance between input points \( s \in T \) and \( t \in T \),  the term \( \sigma^2 \) represents the constant kernel, and \( l \) represents the length-scale parameter. 
Let \(t_0 \in T\) be an initial point, and \((T_n)_{n\ge 0}\) be a sequence such that \(T_n \subseteq T\). In addition, for each \(t \in T\), let \(\{\pi_n(t) \in T_n\}_{n \ge 0}\) represent a chain of successive approximations of \(t\) such that \( X_t - X_{t_0} = \sum_{n \ge 1} \left( X_{\pi_n(t)} - X_{\pi_{n-1}(t)} \right) \) with the condition that \(\pi_n(t) = t\) for sufficiently large \(n\) and \(\pi_0(t) = t_0\). Then
\begin{align*}
% \label{E2}
\mathbb{E} \sup_{t \in T} |X_t - X_{t_0}|  \leq  
LS
\leq
(1 + \sqrt{2}) L \sup_{t \in T} \sum_{n \geq 0} 2^{n/2} d'(t, T_n), 
\end{align*}
where 
\(d'(t, T_n) = \inf_{s \in T_n} \sqrt {K(t, t) + K(s, s) - 2 \sigma K^{\frac{1}{2}}(t, s)}\).
\end{theorem}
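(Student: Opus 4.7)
The plan is to build directly on the generic chaining bound already established. Theorem~\ref{thm:tala} yields $\mathbb{E}\sup_{t \in T}|X_t - X_{t_0}| \leq LS$ with $S = \sup_{t}\sum_{n \geq 1} 2^{n/2} d(\pi_n(t), \pi_{n-1}(t))$, so the only new content of Theorem~\ref{thm2} is bounding $S$ in terms of the RBF-specific metric $d'$ rather than $d$.

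First, I would make both metrics explicit. Since $K(s,s) = K(t,t) = \sigma^2$ and $K(s,t)^{1/2} = \sigma\exp(-\|s-t\|^2/(4l^2))$, the definitions give $d(s,t)^2 = 2\sigma^2(1-e^{-\|s-t\|^2/(2l^2)})$ and $d'(s,t)^2 = 2\sigma^2(1-e^{-\|s-t\|^2/(4l^2)})$. Both are strictly increasing in $\|s-t\|$, so the nearest neighbour $\pi_n(t) \in T_n$ is the same minimizer under either metric, namely $\arg\min_{s \in T_n}\|s-t\|$, and the infimum defining $d'(t,T_n)$ is attained at the same point that attains $d(t,T_n)$.

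Second, I would derive the central algebraic bridge. With $x = \|s-t\|^2/(4l^2)$, the identity $1 - e^{-2x} = (1-e^{-x})(1+e^{-x}) \leq 2(1-e^{-x})$ gives $d(s,t)^2 \leq 2\,d'(s,t)^2$, i.e.\ $d(s,t) \leq \sqrt{2}\,d'(s,t)$, while the reverse monotonicity $e^{-x} \geq e^{-2x}$ gives $d'(s,t) \leq d(s,t)$ (confirming $d'$ is uniformly tighter). Combined with the triangle inequality for the pseudometric $d$, this yields, for each $t$ and each $n$, $d(\pi_n(t),\pi_{n-1}(t)) \leq d(\pi_n(t),t) + d(t,\pi_{n-1}(t)) \leq \sqrt{2}\,[d'(t,T_n) + d'(t,T_{n-1})]$. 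Multiplying by $2^{n/2}$, summing over $n \geq 1$, and reindexing the $d'(t,T_{n-1})$ piece (exactly the manipulation used in Theorem~\ref{eqn:expsup}) collects every contribution into $\sum_{n \geq 0} 2^{n/2} d'(t, T_n)$ with an explicit geometric coefficient.

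The main obstacle is recovering exactly the stated constant $(1+\sqrt{2})$: the naive chain of inequalities above produces $\sqrt{2}(1+\sqrt{2}) = 2+\sqrt{2}$, a factor $\sqrt{2}$ too loose. To close the gap, I would replace the triangle-inequality step by a Slepian-type comparison with the auxiliary Gaussian process of covariance $\sigma K(s,t)^{1/2}$, which has canonical metric exactly $d'$ and dominates $X$ covariance-wise (since $\sigma K^{1/2}(s,t) \geq K(s,t)$ pointwise with matching diagonal $\sigma^2$). Applying Theorem~\ref{eqn:expsup} directly to this comparison process would then yield the $d'$-bound with the intended $(1+\sqrt{2})$ prefactor; the only subtlety is transferring the comparison from $\sup_t X_t$ to the centered supremum $\sup_t |X_t - X_{t_0}|$, and this step—rather than any of the algebraic manipulations—is where I would expect the real work of the proof to concentrate.
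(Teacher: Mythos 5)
Your setup is right: the theorem reduces to bounding $S=\sup_t\sum_{n\ge1}2^{n/2}d(\pi_n(t),\pi_{n-1}(t))$ by a $d'$-chaining functional, and your computations of $d$ and $d'$, the observation that both are increasing in $\|s-t\|$ (so the nearest-neighbour choice of $\pi_n(t)$ is metric-independent), and the reindexing of the sum all match the paper. You also correctly diagnose that the elementary route ``$d\le\sqrt{2}\,d'$ plus the triangle inequality for $d$'' only delivers the constant $2+\sqrt{2}$. The genuine gap is in your proposed repair. The Sudakov--Fernique/Slepian comparison runs in the \emph{wrong direction}: the auxiliary process $Y$ with covariance $\sigma K^{1/2}$ has canonical metric exactly $d'$, and since $d'\le d$ pointwise (larger covariance, smaller increments), the comparison theorem yields $\mathbb{E}\sup_t Y_t\le\mathbb{E}\sup_t X_t$. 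That inequality cannot be used to upper-bound $\mathbb{E}\sup_t X_t$ by the $d'$-chaining functional of $Y$; for that you would need a dominating process whose canonical metric is \emph{at least} $d$, which $d'$ is not by construction. So the step you identify as ``where the real work concentrates'' does not just need care --- it fails as stated.

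What the paper actually does is prove a mixed, relaxed triangle inequality directly: combining $\|s-u\|^2\le 2(\|s-t\|^2+\|t-u\|^2)$ with the elementary bound $e^{-a}+e^{-b}-1\le e^{-(a+b)}$ (valid since $(1-e^{-a})(1-e^{-b})\ge0$), it derives $d(s,u)^2\le d'(s,t)^2+d'(t,u)^2$, hence $d(\pi_n(t),\pi_{n-1}(t))\le d'(t,T_n)+d'(t,T_{n-1})$, and this substitutes for the ordinary triangle inequality inside the chaining sum; the reindexing then gives $(1+\sqrt{2})$ with no extra $\sqrt{2}$. That single lemma is the missing ingredient in your proposal. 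One caveat worth recording: your elementary route, though it costs a factor $\sqrt{2}$ in the constant, is airtight, whereas the sharper mixed inequality is delicate --- for nearby collinear points ($\|s-t\|=\|t-u\|=r$, $\|s-u\|=2r$, $r\to0$) one has $d(s,u)^2\approx 4\sigma^2 r^2/l^2$ while $d'(s,t)^2+d'(t,u)^2\approx \sigma^2 r^2/l^2$, so the claimed subadditivity does not hold uniformly and the paper's derivation of it deserves scrutiny. If you cannot certify that lemma, your $2+\sqrt{2}$ bound is the honest fallback.
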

\begin{proof}
[Proof sketch.] We use the kernel-induced distance $d(s, t)^2 = 2\sigma^2 (1 - \exp(-\|s - t\|^2 / 2l^2))$, and apply the inequality $\|s - t\|^2 + \|t - u\|^2 \geq \|s - u\|^2 / 2$ with the convexity of the exponential to get $d(s, u)^2 \leq d'(s, t)^2 + d'(t, u)^2$, where $d'(s, t)^2 = 2\sigma^2 - 2\sigma K^{1/2}(s,t)$. We use the approximation and triangle inequality.
% as in Theorem~\ref{eqn:expsup}. 
A detailed proof is given in Appendix~B.3.
\end{proof}
While the RBF kernel is widely used, other kernels, such as the Matérn kernel, are better suited for specific applications. In the following, we prove the upper and lower bounds for the Matérn kernel with \( \nu = 3/2 \). 
\begin{theorem}\label{thm3}(Tighter Matérn Bound) 
Consider a Gaussian process \((X_t)_{t \in T}\) with a  Matérn kernel 
$\mbox{\(K(s, t) = \left(1 + \frac{\sqrt{3} \|s - t\|}{l} \right) \exp \left(-\frac{\sqrt{3} \|s - t\|}{l} \right) \)}$,
where \(T\) is an input/index set, \( \|s - t\| \) is the Euclidean distance between input points \( s \in T \) and \( t \in T \), and  \(l\) is the length-scale parameter.  
Then
\begin{align*}
% \label{E3}
\mathbb{E} \sup_{t \in T} |X_t - X_{t_0}|   
\leq \\
(1 + \sqrt{2})  
L  \sup_{t \in T} \sum_{n \geq 0} 2^{\frac{n}{2}} [d''(t, T_n)+\sqrt{2}-2],
\end{align*}
where $\mbox{$d''(t, T_n))=\inf_{s \in T_n} \sqrt {K(t, t) + K(s, s) - 2K'(t, s)}$}$, and 
$\mbox{$K'(s,t) = \left(1 + \frac{\sqrt{3} \| s-t\|}{l} \right) \left[\exp \left(-\frac{\sqrt{3} \|s-t\|}{l} \right) - \frac{1}{2}\right]$}$. 
\end{theorem}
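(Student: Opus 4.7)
The plan is to parallel the proof of Theorem~\ref{thm2} closely. First, I would derive the algebraic identity linking $d''$ and the Matérn-induced pseudometric $d$. Expanding $d''(s,t)^2 = K(s,s) + K(t,t) - 2K'(s,t)$ with $K(s,s) = K(t,t) = 1$ and $K'(s,t) = (1+c)[\exp(-c) - 1/2]$, where $c = \sqrt{3}\|s-t\|/l$, yields the clean identity $d''(s,t)^2 = d(s,t)^2 + 1 + c$. This reveals that $d''$ differs from $d$ by an explicit positive additive term and, in particular, satisfies $d''(s,s) = 1 \neq 0$.

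The crucial step is then to show that $d(s,t) \leq d''(s,t) + \sqrt{2} - 2$, equivalently $d''(s,t) - d(s,t) \geq 2 - \sqrt{2}$. Squaring, this reduces to $1 + c \geq (4 - 2\sqrt{2})\, d(s,t) + (6 - 4\sqrt{2})$. I would exploit two a priori bounds on $d$: (i) $d(s,t) \leq \sqrt{2}$, immediate from $(1+c)\exp(-c) \geq 0$; and (ii) $d(s,t) \leq c$, obtained by showing that $f(c) := c^2 + 2(1+c)\exp(-c) - 2$ satisfies $f(0) = 0$ and $f'(c) = 2c(1 - \exp(-c)) \geq 0$. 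A short case split---using (ii) when $c \leq \sqrt{2}$ and (i) when $c > \sqrt{2}$---then closes the bound.

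Combining these pieces with Talagrand's generic bound (Theorem~\ref{thm:tala}) gives $\mathbb{E}\sup_{t \in T} |X_t - X_{t_0}| \leq LS$ with $S = \sup_t \sum_{n \geq 1} 2^{n/2} d(\pi_n(t), \pi_{n-1}(t))$. The metric triangle inequality on $d$ gives $d(\pi_n(t), \pi_{n-1}(t)) \leq d(t, \pi_n(t)) + d(t, \pi_{n-1}(t))$, and the key inequality above converts each $d(t, \pi_k(t))$ into $d''(t, \pi_k(t)) + \sqrt{2} - 2$. The standard index-shift trick $\sum_{n \geq 1} 2^{n/2} a_{n-1} = \sqrt{2} \sum_{m \geq 0} 2^{m/2} a_m$ collapses the two resulting sums into $(1 + \sqrt{2}) \sum_{n \geq 0} 2^{n/2}[d''(t, \pi_n(t)) + \sqrt{2} - 2]$ (using that each bracket is positive since $d'' \geq 1$). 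Finally, choosing $\pi_n(t) \in T_n$ to minimize $d''(t, \cdot)$ replaces $d''(t, \pi_n(t))$ with $d''(t, T_n)$, matching the stated bound.

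The main obstacle is proving the key inequality $d''(s,t) - d(s,t) \geq 2 - \sqrt{2}$, which is where the specific constant $\sqrt{2} - 2$ in the theorem arises. Neither of the a priori bounds $d \leq \sqrt{2}$ and $d \leq c$ is sufficient on its own; both must be invoked simultaneously through the case split to obtain the constant $2 - \sqrt{2}$ (as opposed to a smaller, weaker one). A secondary subtlety is that $d''$ fails to be a pseudometric---since $d''(s,s) = 1$ rather than $0$---which is what forces the additive offset $\sqrt{2} - 2$ to appear inside the bracket as a constant correction rather than outside as a multiplicative factor.
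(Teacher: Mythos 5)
Your proof is correct, but it takes a genuinely different route from the paper's. The paper's argument (Appendix~B.4) never compares $d$ and $d''$ pointwise. Instead it applies Chebyshev's sum inequality to the oppositely ordered sequences $(1+x_i)$ and $\exp(-x_i)$, combines this with $\exp(-x_1)+\exp(-x_2)\le 1+\exp(-x_1-x_2)$ and the monotonicity of $(1+x)e^{-x}$ to obtain the superadditivity $K(s,u)\ge K'(s,t)+K'(t,u)$, hence the relaxed subadditivity $d(s,u)^2\le d''(s,t)^2+d''(t,u)^2-2$, and only then manipulates the chaining sum, passing from $\sqrt{a^2+b^2-2}$ to $a+b-\sqrt{2}$. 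You instead exploit the explicit identity $d''(s,t)^2=d(s,t)^2+1+c$ to prove the pointwise comparison $d\le d''+\sqrt{2}-2$ by a case split on $c$ (both of your a priori bounds $d\le\sqrt{2}$ and $d\le c$ check out, as does the reduction to $1+c\ge(4-2\sqrt{2})d+(6-4\sqrt{2})$), and then invoke the ordinary triangle inequality for the genuine pseudometric $d$. Your route is more elementary and arguably more robust: the paper's step $\sqrt{a^2+b^2-2}\le a+b-\sqrt{2}$ amounts to $(\sqrt{2}-a)(\sqrt{2}-b)\ge 0$, which can fail when exactly one of $d''(t,T_n)$, $d''(t,T_{n-1})$ exceeds $\sqrt{2}$ (and $d''$ is unbounded in $\|s-t\|$), whereas your per-term conversion never mixes the two levels. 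What the paper's approach buys is a structural subadditivity statement about the modified kernel $K'$ that does not reference the specific constant $2-\sqrt{2}$; what yours buys is a short, fully checkable derivation of exactly the constant in the theorem, together with a correct observation of why the offset must sit inside the bracket ($d''$ is not a pseudometric since $d''(s,s)=1$). Both routes then coincide in the final index-shift step yielding $(1+\sqrt{2})\sup_{t}\sum_{n\ge 0}2^{n/2}[d''(t,T_n)+\sqrt{2}-2]$.
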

\begin{proof}[Proof sketch.]
By applying Chebyshev’s sum inequality to the sequences $(1 + x_i)$ and $\exp(-x_i)$ with $x_i = \frac{\sqrt{3} \| \cdot \|}{l}$, and leveraging the monotonicity of the function $f(x) = (1 + x)\exp(-x)$, we establish that the kernel satisfies a relaxed subadditivity condition
$
d(s,u)^2 \leq d'(s,t)^2 + d'(t,u)^2 - 2,
$
where $d'(s,t)^2 = 2 - 2K'(s,t)$.
We then follow the approximation and triangle inequality.
% argument outlined in Theorem~\ref{eqn:expsup}. 
A detailed proof is provided in Appendix~B.4.
\end{proof}
By leveraging the kernel-dependent bounds in Theorems~\ref{thm2} and~\ref{thm3}, 
we provide both broadly applicable results and tighter, more practical bounds for Gaussian processes with commonly used kernels such as Matérn and RBF.

However, beyond kernel-specific flexibility, we also address a more fundamental limitation in classical chaining theory. Generic chaining is not optimized for sharp constants, resulting in loose conservative numerical bounds. In particular, classical results (e.g., the proof in Appendix~B.1) often include a large prefactor $L = \exp(9/2)$.

To obtain tighter estimates, we replace the constant $L$ by directly integrating the tail bound derived from chaining. Since the failure probability $p(u)$ is defined via a union bound over rare events and satisfies $p(u) \leq 1$, we truncate the integrand by $\min(p(u), 1)$, yielding a sharper estimate.
 \begin{theorem} \label{thm4} 
The expected supremum satisfies:
\begin{align*}
\mathbb{E} \sup_{t \in T} X_t \leq  X_{t_0} + \mathbb{E}\left[ \sup_{t \in T} |X_t - X_{t_0}| \right] \\
\leq X_{t_0} + S \int_0^\infty \min\left( \sum_{n \geq 1} 2^{2^{n+1}+1} \exp(-v^2 2^{n-1}), \, 1 \right) dv, 
\end{align*}
where 
\( S \leq (1 + \sqrt{2}) \sup_{t \in T} \sum_{n \geq 0} 2^{n/2} d'(t, s) \)
for the RBF kernel in Theorem~\ref{thm2}; 
\(S \leq (1 + \sqrt{2}) \sup_{t \in T} \sum_{n \geq 0} 2^{n/2} d''(t,s) \)
for the Matérn kernel in Theorem~\ref{thm3}.
\end{theorem}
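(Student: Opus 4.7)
My plan is to keep the chaining tail bound that sits inside the proof of Theorem~\ref{thm:tala} in its unaggregated form, rather than collapsing it into the universal constant $L = \exp(9/2)$, and then integrate it directly via the layer-cake formula. The observation driving the refinement is that the constant $L$ in classical generic chaining arises purely from bounding the integral $\int_0^\infty p(v)\,dv$ of a level-by-level union bound by a closed form, absorbing large wasteful factors; integrating $p(v)$ numerically after truncating it at $1$ gives the same guarantee with a much smaller prefactor.

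Concretely, I would proceed as follows. First, re-derive the chaining tail bound using the standard choice $|T_n| \le 2^{2^n}$: for each $t$, decompose $X_t - X_{t_0} = \sum_{n \ge 1} (X_{\pi_n(t)} - X_{\pi_{n-1}(t)})$, and control each increment by the sub-Gaussian tail $P(|X_s - X_{s'}| > \varepsilon) \le 2\exp(-\varepsilon^2/(2 d(s,s')^2))$. Choosing level-wise thresholds proportional to $v\, 2^{-n/2} d(\pi_n(t), \pi_{n-1}(t))$, summing them over $n$ yields total deviation $vS$ with $S = \sup_t \sum_n 2^{n/2} d(\pi_n(t),\pi_{n-1}(t))$. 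Union-bounding over the at most $|T_n||T_{n-1}| \le 2^{2^n + 2^{n-1}} \le 2^{2^{n+1}}$ candidate pairs at level $n$, multiplied by the factor $2$ from the two-sided tail, produces the contribution $2^{2^{n+1}+1}\exp(-v^2 2^{n-1})$. Summing gives
\begin{equation*}
P\!\left(\sup_{t \in T}|X_t - X_{t_0}| > vS\right) \le \sum_{n \ge 1} 2^{2^{n+1}+1}\exp(-v^2 2^{n-1}) =: p(v).
\end{equation*}

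Second, apply the layer-cake identity $\mathbb{E}[Y] = \int_0^\infty P(Y > u)\,du$ to the non-negative random variable $\sup_{t}|X_t - X_{t_0}|$, change variables $u = vS$, and note that any probability estimate can be clipped at $1$ without loss. This gives
\begin{equation*}
\mathbb{E}\sup_{t \in T}|X_t - X_{t_0}| \le S\int_0^\infty \min(p(v),1)\,dv,
\end{equation*}
which is exactly the integral in the theorem statement. Combining with $\mathbb{E}\sup_t X_t \le X_{t_0} + \mathbb{E}\sup_t |X_t - X_{t_0}|$ and then substituting the kernel-specific upper bounds on $S$ from Theorems~\ref{thm2} and~\ref{thm3} yields the two stated specializations for the RBF and Mat\'ern~($\nu{=}3/2$) kernels.

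The main obstacle I expect is the bookkeeping in Step~1: choosing the level-wise deviations so that they telescope cleanly to $vS$ and the union bound counts match the exponent $v^2 2^{n-1}$ and the prefactor $2^{2^{n+1}+1}$ simultaneously. Once that alignment is in place, the $\min(\cdot,1)$ clipping does all of the heavy lifting because it stabilizes the integrand at small $v$ where the raw tail bound is vacuous. Everything else is a direct invocation of results already established in the paper.
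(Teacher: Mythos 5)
Your proposal is correct and follows essentially the same route as the paper's proof: the level-$n$ increment tail bound with threshold $v\,2^{n/2}d(\pi_n(t),\pi_{n-1}(t))$, the union bound over at most $2^{2^{n+1}}$ pairs giving $p(v)=\sum_{n\ge1}2^{2^{n+1}+1}\exp(-v^2 2^{n-1})$, the layer-cake identity with the change of variables $u=vS$ and the $\min(p(v),1)$ truncation in place of the analytic prefactor $L$, and finally the kernel-specific bounds on $S$ from Theorems~\ref{thm2} and~\ref{thm3}. The only slip is the stray $2^{-n/2}$ in your stated level-wise threshold, which should be $2^{n/2}$ so that the increments sum to $vS$ and yield the exponent $v^2 2^{n-1}$ that you correctly use everywhere else.
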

\begin{proof}[Proof sketch.]
The Gaussian tail bound is applied to the increment \( X_{\pi_n(t)} - X_{\pi_{n-1}(t)} \), bounding its tail probability by \( \exp(-u^2 2^{n-1}) \), where the threshold is \( u\,2^{n/2} d(\pi_n(t), \pi_{n-1}(t)) \). A union bound \( p(u) \) is used for the failure probability of the event \( \Omega_u^c \).
 To avoid introducing large analytic prefactors $L$. We note that $p(u) \leq 1$ to get
$
\mathbb{E}[\sup_{t \in T} |X_t - X_{t_0}|] \leq \int_0^\infty \min(p(u/S), 1)\,du,
$, 
where $S$ is bounded via theorems~\ref{thm2} and~\ref{thm3}. A detailed proof is provided in Appendix~B.5.
\end{proof}

\subsection{Uncertainty Bounds}

In the application of chaining methods for uncertainty quantification in Gaussian process regression, relying solely on the expected supremum upper bound is often insufficient. It is also necessary to evaluate the probability of the supremum \emph{exceeding a given threshold}. To address this, we propose a method for deriving uncertainty bounds.

When incorporating test set features to predict pointwise bounds, the objective shifts from inferring the global supremum (and infimum) to estimating bounds for a specific test point, \(t'\). Symmetric intervals around a fixed reference (e.g., \(X_{t_0} \approx 0\)) may lack tightness, as the uncertainty should instead be symmetric around \(X_{t'}\). To address this, we focus on \(X_t - X_{\pi_n(t)}\), 
where \(X_{\pi_n(t)}\) is a reference point of \( X_t \) in \( T_n \),
yielding tighter and more precise bounds. Such bounds are useful when we wish to bound the 
% maximal
uncertainty around the predicted value of an \textit{individual} test point. 
\begin{theorem} \label{thm5}
Let \(T\) be an index set, \(t_0 \in T\) be an initial index,  \(T_n \subseteq T\) for \(n \ge 0\), and \(T_0 = \{t_0\} \). For each \(t \in T\), let  \(\pi_n(t) \in T_n\) for each \(n \geq 0\), where each \(\pi_n(t)\) represents a successive approximation of \(t\), and let \(\pi_n(t) = t\) for sufficiently large \(n\). Then
\begin{align*}
P\left( |X_t - X_{\pi_n(t)}| < \sqrt{2 (\ln2 + \ln \frac{1}{\delta})}  d(t, T_n)\right) > 1 - \delta,
\end{align*}
where  $\mbox{$d(t, T_n)=\inf_{s \in T_n} \sqrt {K(t, t) + K(s, s) - 2K(t, s)}$}$.
\end{theorem}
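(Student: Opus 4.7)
The plan is to reduce the statement to a one-dimensional Gaussian tail bound applied to the scalar random variable $Z := X_t - X_{\pi_n(t)}$. Since $(X_t)_{t \in T}$ is a centered Gaussian process, $Z$ is itself a zero-mean Gaussian, and its variance is exactly $\mathbb{E}[(X_t - X_{\pi_n(t)})^2] = K(t,t) + K(\pi_n(t),\pi_n(t)) - 2K(t,\pi_n(t)) = d(t, \pi_n(t))^2$, by the formula for the canonical pseudometric already established in the background section.

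Next, I would invoke the standard two-sided Gaussian tail bound $P(|Z| \geq u) \leq 2\exp(-u^2/(2\sigma^2))$ with $\sigma = d(t, \pi_n(t))$. Setting the right-hand side equal to $\delta$ and solving yields the critical threshold $u^\star = d(t,\pi_n(t))\sqrt{2(\ln 2 + \ln(1/\delta))}$, after taking logarithms of $\delta/2 = \exp(-u^2/(2\sigma^2))$. Taking complements then gives $P(|X_t - X_{\pi_n(t)}| < u^\star) > 1 - \delta$.

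To match the statement exactly, I would close the proof by identifying $d(t,\pi_n(t))$ with $d(t, T_n)$. This is where the one subtlety lies: $d(t, T_n) = \inf_{s \in T_n} d(t,s) \leq d(t, \pi_n(t))$ in general, so the bound with $d(t,T_n)$ on the right-hand side is actually \emph{tighter} than what the Gaussian tail bound naively gives. This step therefore relies on the standard convention in chaining, already implicit in Theorems \ref{thm:tala}, \ref{eqn:expsup}, and \ref{thm2}, that $\pi_n(t)$ is chosen as a nearest element of $T_n$ to $t$ under $d$, so that $d(t, \pi_n(t)) = d(t, T_n)$.

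The main obstacle is therefore not analytic but conceptual: one must justify (or explicitly assume) that the approximations $\pi_n(t)$ attain the projection onto $T_n$. Apart from this, the argument is a short and direct application of the Gaussian concentration inequality, since nothing in Theorem~\ref{thm5} involves summing over scales or taking a supremum over $T$, so no chaining union bound is needed here.
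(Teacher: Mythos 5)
Your proposal is correct and follows essentially the same route as the paper's Appendix~B.6 proof: a single two-sided Gaussian tail bound on the increment with variance $d(t,\pi_n(t))^2$, inverted by setting $\delta = 2\exp(-u^2/(2\sigma^2))$, combined with the convention $d(t,\pi_n(t)) = d(t,T_n)$ (which the paper also states explicitly as an assumption rather than deriving). Your version is in fact slightly cleaner, since the paper first rescales the threshold by $2^{n/2}$ against a tail of $2\exp(-u^2 2^{n-1})$ and the $2^n$ factors then cancel, whereas you solve for the threshold directly; you also correctly identify the one genuine subtlety, namely that replacing $d(t,\pi_n(t))$ by $\inf_{s\in T_n} d(t,s)$ requires $\pi_n(t)$ to realize that infimum.
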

\begin{proof}[Proof sketch.]
By introducing the confidence parameter 
\( \delta = 2 \exp(-u^2 2^{n-1}) \) and taking the natural logarithm of both sides, we have
$
\ln(\delta) = \ln(2) - u^2 2^{n-1}.
$
Rearranging for \( u^2 \), we get
$
u = \sqrt{\frac{\ln(2) + \ln(1/\delta)}{2^{n-1}}}.
$ 
and substituting this expression for \( u \) into the inequality for \( |X_t - X_{\pi_n(t)}| \) gives the result. A detailed proof is in Appendix B.6.
\end{proof}
In the chaining method, it is challenging to construct the sets $T_n$. \citet{talagrand2014upper} reformulates this as a geometric problem, replacing \(d(t, T_n)\) with partition diameters. This approach replaces the inherently combinatorial nature of supremum computations with geometric quantities that are easier to control and analyze. Partition diameters, as global quantities, reduce computational complexity to a manageable recursive process and provide better theoretical control.

To formalize this approach, an admissible sequence \((A_n)_{n \geq 0}\) is defined as an increasing sequence of partitions of \(T\), where \(A_0 = \{T\}\), and \(|(A_n)| \leq 2^{2^n}\) for \(n \geq 1\). Each \(A_n(t)\) denotes the unique set in \(A_n\) containing \(t \in T\).

From each partition \(A_n\), a representative point is selected from every set \(A \in A_n\) to construct the subset \(T_n \subseteq T\). By construction, for any \(t \in T\) and \(n \geq 0\), the distance between \(t\) and \(T_n\) satisfies
$
d(t, T_n) \leq \Delta(A_n(t)),
$
where \(\Delta(A_n(t))\) represents the diameter of \(A_n(t)\) under the metric \(d\). Thus Theorem~\ref{thm5} can be expressed as: 
\begin{align*}
P\left( |X_t - X_{\pi_n(t)}| < \sqrt{2 (\ln2 + \ln \frac{1}{\delta})} \Delta(A_n(t))\right) > 1 - \delta.
\end{align*}
This method retains the advantages of chaining without relying on posterior inference, providing tighter uncertainty pointwise bounds than global expected bounds.

\subsection{Algorithm of Our Chaining Method}
%
%
%%%%%%%%%%%%%%%%%%%%%%%%%%%%%%%%%%%%%%%%
\begin{algorithm}[tb]
    \caption{Chaining Bounds Method}
    \label{alg:Chaining-Bounds-Method}
\textbf{Input}: Kernel function \( K(s, t) \) and dataset \(D\coloneq\{(t,X_t)\}\), where \(t\in \mathbb{R}^d\) is a $d$-dimensional input/index vector, and \(X_t\in\mathbb{R}\) is its associated output value.\\
\textbf{Output}:  $B$, a set containing the upper and lower bounds for each test example.

\begin{algorithmic}[1]
% \STATE Define distances \(d(t_i, t_j) = \sqrt{K(t_i, t_i) + K(t_j, t_j) - 2K(t_i, t_j)}\) . 
\STATE Split $D$ into a training set $D_{\text{train}}$ and a test set $D_{\text{test}}$.
\STATE Fit a Gaussian process using the kernel function \( K(s, t)\) to the training data $D_{\text{train}}$.
\STATE Initialize  \( T_0 \leftarrow  \{t_0\} \), \( T \leftarrow \{t: (t, \cdot) \in D_{\text{train}}\}\), and \(n_{\text{max}} \leftarrow \lfloor \log_2(\log_2(|T|)) \rfloor\). 
\FOR{\(n = 1\) to \(n_{\text{max}}\)}
    \STATE \( T_n \leftarrow T_{n-1} \)
    \WHILE{\(|T_n| < N_n = \min(2^{2^n}, |T|)\)}
        \STATE \(T_n \gets T_n \cup \{\arg\max_{t \in D_{\text{train}}} \min_{t^* \in T_n} d(t, t^*)\}\) 
    \ENDWHILE
    \STATE 
    \(A_{n, k} = \{t_i \in D_{\text{train}} \mid k =  \arg\min_{j} d(t_i, t_j), t_j \in T_n \}, \quad k = 1, \dots, N_n.\)
\ENDFOR

\STATE Initialize bounds \(B \gets \emptyset\).
\FOR{\(t \in D_{\text{test}}\)}
    \STATE Compute \(\mathbb{E} \sup_t |X_t - X_{t_0}|\) using Theorem~\ref{thm2} (RBF kernel)  or Theorem~\ref{thm3} (Matérn kernel).
    \STATE Find \(k^* = \arg\min_{k} \min_{t_i \in A_{n_{\text{max}}, k}} d(t, t_i)\).  
    \STATE \(\mu_{A_{n_{\text{max}}, k^*}} = \frac{1}{|A_{n_{\text{max}}, k^*}|} \sum_{t_i \in A_{L, k^*}} X_{t_i}\)
    \STATE \(\Delta(A_{n_{\text{max}}, k^*}) = \max_{t_i, t_j \in A_{n_{\text{max}}, k^*}} d(t_i, t_j)\)
    \STATE Compute uncertainty bound \(u(t) = \Delta(A_{n_{\text{max}}, k^*}) \cdot \sqrt{2 \cdot (\log(1/\delta) + \log(2))}\).
    % using Theorem~\ref{thm5}.
    \STATE \(B \leftarrow B \; \cup \; \{(\mu_{A_{n_{\text{max}}, k^*}} + u(t), \mu_{A_{n_{\text{max}}, k^*}} - u(t))\}\) for uncertainty bounds using Theorem~\ref{thm5}
    or  \\
    \(B \leftarrow B \; \cup \; \{( X_{t_0} + S \int_0^\infty \min\left( \sum_{n \geq 1} 2^{2^{n+1}+1} \exp(-v^2 2^{n-1}), \, 1 \right) dv , \;
    X_{t_0} - S \int_0^\infty \min\left( \sum_{n \geq 1} 2^{2^{n+1}+1} \exp(-v^2 2^{n-1}), \, 1 \right) dv.)\} \) for 
    %the bound for 
    unseen test points' bound using Theorem~\ref{thm4}.
\ENDFOR
\STATE \textbf{Return} \(B\).
    \end{algorithmic}
\end{algorithm}
In this work, we convert theoretical constructs into a practical chaining method for calculating the uncertainty bounds of GPR.
The full procedure is detailed in Algorithm~\ref{alg:Chaining-Bounds-Method}.

First, we preprocess the data by randomly dividing it into training and test sets. Then, we calculate the average of the output values (labels), and center the training set by subtracting the average from the output values of each example (now their mean is 0). Similarly, we subtract this training average value from the test set. Next, we fit a Gaussian process (GP) to the training data via maximum likelihood estimation to learn the parameters of the GP's kernel function and ensure that the kernel effectively models the data distribution.

Next, we construct the sequence of partitions \( A_n \) to progressively refine the training set. At each level \( n \), a representative set \( T_n \) is constructed by iteratively selecting points that maximize their minimum distance to the current representatives, ensuring that \( \sup_{t \in D_{\text{train}}} d(t, T_n) \) is minimized. Each training point is then assigned to the closest representative in \( T_n \), forming the partitions \( A_n = \{A_{n, k}\} \), where \( A_{n, k} \) contains all points nearest to the \( k \)-th representative.

We control the size of the set \(T_n\) by using the condition \( |T_n| \leq N_n \), where \(N_0 = 1\) and \(N_n = 2^{2^n}\) for \(n \geq 1\). This assumption leverages the approximation \( \sqrt{\log N_n} \approx 2^{n/2} \), which is a critical component in our analysis, and is related to the term \( \exp(-x^2) \), which governs the tails of a Gaussian distribution. Furthermore, the inequality \(N_n^2 \leq N_{n+1}\) demonstrates the effectiveness of this sequence in controlling the size of the sets \(T_n\) \cite{talagrand2014upper}.

Finally, we compute the distances between the test points and the representatives in \( T_n \) to determine their closest subsets in \( A_n \). For a test point \( t \), the mean of the training targets in its subset is used to compute a central prediction. The uncertainty bounds are derived by scaling the diameter of the subset with a factor proportional to \( \mathbb{E}[\sup_{t \in T} X_t]\) using Theorem~\ref{thm4} (for the upper and lower bounds over all unseen points) or \( \sqrt{2(\log(1/\delta) + \log(2))}  \Delta(A_n(t)) \) using Theorem~\ref{thm5} (for uncertainty quantification at specified inputs). Computational complexity and time costs are provided in Appendix~C.1.

\section{Experiment}
\label{sect:exp}

\subsection{Datasets}

The performance is evaluated on a synthetic dataset and five benchmark datasets that are widely used in prior studies:

\begin{itemize}[leftmargin=1em]
\setlength{\itemsep}{-3pt}  % 设置 item 之间的垂直间距
    \item 
    \textbf{Synthetic Data} consists of 50 random functions sampled from a RKHS over \( D = [-1, 1] \) and evaluated at 1000 points. Each function combines kernels centered at random points, with 50 noisy samples drawn (Gaussian noise, standard deviation 0.5).
    \item 
    \textbf{Boston Housing}~\citep{cournapeau2007scikit} contains 506 samples, each with 13 features (e.g., crime rates and pollution) predicting the median house price.
    \item 
    \textbf{Sarcos}~\citep{schaal2009sl} consists of 44,484 training and 4,449 test samples, with each sample containing 21 input features from a seven-degree-of-freedom robotic arm. The task is to predict torque at seven joints.
    \item 
    \textbf{USGS Earthquake}~\citep{usgs_earthquake} contains thousands of observations on earthquake occurrences, detailing the time, location, and magnitude of significant seismic events recorded by the U.S. Geological Survey.
    \item 
    \textbf{Loa\_CO2}~\citep{mauna_loa_co2} contains CO2 concentration measurements from the Mauna Loa Observatory in Hawaii. 
    Its inputs include date and CO2 concentration.
    \item 
    \textbf{Auto-mpg}~\citep{auto_mpg} is a dataset focused on fuel consumption measured in miles per gallon (MPG). The original dataset consists of 398 observations, with 6 missing values discarded. It includes 7 input features such as engine capacity and number of cylinders.  
\end{itemize}

\subsection{Evaluation Metrics}

The performance of our proposed approach is evaluated using standard metrics for prediction intervals, as described by \citet{khosravi2010lower}.
\begin{itemize}
   \item 
   \textbf{Prediction Interval Coverage Probability (PICP)}. 
   This metric evaluates the percentage of test observations that lie within the bounds of the prediction intervals (PIs) at a given confidence level \((1 - \alpha) \%\)
   . It is calculated as 
        \underline{$ \text{PICP} = \frac{1}{n} \sum_{i=1}^{n} c_i$},
    where \( c_i = 1 \) if the output at point \( i \) lies within the bounds \( [L(X_i), U(X_i)] \), and \( c_i = 0 \) otherwise. Here, \( L(X_i) \) and \( U(X_i) \) denote the lower and upper bounds of the \(i^{\text{th}}\) PI. 
    \item 
    \textbf{Normalized Mean Prediction Interval Width (NMPIW)}. PIs that are too wide provide little useful information, so the NMPIW metric quantifies the width of the PIs as
      \underline{$    \text{NMPIW} = \frac{\frac{1}{n} \sum_{i=1}^{n} (U(X_i) -L(X_i))}{R}$},
    where \( R \) is the range of the target variable. 
    NMPIW expresses the average PI width as a percentage of the target range.
    \item 
    \textbf{Coverage Width-Based Criterion (CWC)}. This is the \textit{primary} evaluation metric because it balances the conflicting goals of achieving narrow PIs (low NMPIW) and high coverage (high PICP). (Note that a good PICP score can be trivially achieved at the expense of NMPIW (by using overly wide PIs) and vice versa (by using overly narrow PIs). Hence, either PICP or NMPIW alone is insufficient to completely reflect the goodness of bounds.) CWC is defined as
        \underline{$ \text{CWC} =  \text{NMPIW} \left( 1 + \gamma ( \text{PICP}) e^{-\eta (PICP - \mu)} \right)$},
  where \( \gamma \) is a hyperparameter and \( \eta = 50 \) to penalize narrow intervals; and \( \mu \) represents the nominal confidence level (\( \mu=1\) for extremal bounds). When \( \text{PICP} \geq \mu \), \( \gamma = 0 \); otherwise, \( \gamma = 1 \).
\end{itemize}

\subsection{Baseline Settings}

We compare our chaining method to the following three state-of-the-art baselines, previously introduced in the related-work section: (i) 
\textbf{Lederer19}~\citep{lederer2019uniform}, which introduces probabilistic Lipschitz constants to reduce the reliance on prior knowledge, estimates errors on a finite grid, and extends them to the input space;  (ii) \textbf{Fiedler21}~\citep{fiedler2021practical}, which modifies its objective bound function by introducing an error term based on the work of \citep{chowdhury2017kernelized};  $\mbox{(iii) \textbf{Capone22}}$~\citep{capone2022gaussian}, which tackles hyperparameter misspecification by proposing a method to calculate error bounds across a given range of hyperparameters; and (iv)~\textbf{Bayesian CI}, which uses the standard GP posterior mean and standard deviation to form Bayesian credible intervals $\mu(x) \pm z \cdot \sigma(x)$.

The baselines are evaluated on two tasks: (1) pointwise uncertainty estimation using per-point confidence intervals, and (2) estimation of upper and lower bounds for the expected maximum and minimum values on unseen data.
 We set $\delta$ to the confidence level in task~(1), and to $0.0001$ in task~(2) to approximate the ideal case $\delta \to 0$ required for uniform bounds. 
Bayesian CI is only evaluated on task~(1), as it requires a fixed test point \( x \) to provide \( P(f(x) \in [\mu(x) \pm z \cdot \sigma(x)]) \ge 1 - \delta \), while task~(2) requires a uniform guarantee for \( \forall x \in \mathcal{X} \), which the others explicitly provide.

For Lederer19, we use the default implementation with 10000 grid points over a fixed domain and resolution $\tau = 10^{-8}$. For Fiedler21, we compare the default RKHS norm $B = 2$ with a data-driven estimate $y^\top(K+\lambda I)^{-1}y$, using the better one, and fix the noise level to $0.0001$. For Capone22, we use the provided hyperparameter grids when available, or adopt ranges from similar datasets otherwise. All methods use the RBF kernel as in their original versions.

\subsection{Results}
We perform experiments on the two tasks defined above. Table~\ref{tab:confidence_comparison} compares the performance of our method with baseline methods in terms of CWC values at the 99\%, 95\%, and 90\% confidence levels for conventional prediction on each test point. 
Our methods consistently achieve the lowest CWC values, demonstrating superior coverage while providing compact intervals. Complete results, including PICP, NMPIW, and CWC values, are provided in Appendix~C.2.
For the second experiment, 
Table~\ref{tab:performance_comparison_sup} presents the results at the expected upper and lower bounds. Complete results, including PICP, NMPIW, and CWC values, are provided in Appendix~C.3. 
From Table~\ref{tab:performance_comparison_sup}, it can be observed that our method produces the tightest bounds and has the best (lowest) CWC values in 5 of the 6 datasets, and is a close second on the last one.
\begin{table}[h!]
\centering
\setlength{\tabcolsep}{2.5pt}
\begin{tabular}{l|ccc|ccc}
\toprule
\textbf{Method} 
& \multicolumn{3}{c|}{\textbf{Synthetic}} 
& \multicolumn{3}{c}{\textbf{Boston}} \\
& 99\% & 95\% & 90\% & 99\% & 95\% & 90\% \\
\midrule
RBF (Ours)      & 1.80 & 1.50 & 1.35 & 1.01 & 0.84 & 0.76 \\
Matérn (Ours)   & \textbf{1.80} & 1.50 & \textbf{1.35} & \textbf{0.76} & \textbf{0.64} & \textbf{0.57} \\
Capone22        & 5.30 & 7.02 & 2.63 & 1.30 & 1.28 & 1.18 \\
Fiedler21       & 3.95 & \textbf{1.49} & 1.49 & 3.46 & 3.46 & 3.46 \\
Lederer19       & 3.63 & 3.56 & 3.53 & 1.47 & 1.35 & 1.41 \\
Bayesian CI     & 37.19 & 69.02 & 113.08 & 5.95 & 4.94 & 2.66 \\
\midrule
\textbf{Method} 
& \multicolumn{3}{c|}{\textbf{Sarcos}} 
& \multicolumn{3}{c}{\textbf{USGS\_EQ}} \\
& 99\% & 95\% & 90\% & 99\% & 95\% & 90\% \\
\midrule
RBF (Ours)      & 0.48 & 0.40 & 0.36 & \textbf{2.69} & 2.25 & 2.03 \\
Matérn (Ours)   & \textbf{0.40} & \textbf{0.33} & \textbf{0.30} & 2.76 & \textbf{2.25} & \textbf{2.03} \\
Capone22        & 0.53 & 0.51 & 0.50 & 8.41 & 8.41 & 8.41 \\
Fiedler21       & 1.42 & 1.42 & 1.42 & 3.26 & 3.26 & 3.26 \\
Lederer19       & 0.56 & 0.50 & 0.49 & 4.23 & 4.13 & 4.07 \\
Bayesian CI     & 2.03 & 3.44 & 3.22 & 6.13 & 3.00 & 2.82 \\
\midrule
\textbf{Method} 
& \multicolumn{3}{c|}{\textbf{Loa\_CO2}} 
& \multicolumn{3}{c}{\textbf{Auto-mpg}} \\
& 99\% & 95\% & 90\% & 99\% & 95\% & 90\% \\
\midrule
RBF (Ours)      & 0.81 & 0.68 & 0.61 & 1.09 & 0.91 & 0.82 \\
Matérn (Ours)   & \textbf{0.24} & \textbf{0.20} & \textbf{0.18} & \textbf{0.84} & \textbf{0.70} & \textbf{0.63} \\
Capone22        & 1761.85 & 239.41 & 20.68 & 6.85 & 3.23 & 1.95 \\
Fiedler21       & 3.71 & 3.71 & 3.71 & 1.39 & 1.39 & 1.39 \\
Lederer19       & 0.55 & 0.53 & 0.52 & 50.03 & 48.42 & 47.70 \\
Bayesian CI     & 7.31 & 3.30 & 1.54 & 3.51 & 2.34 & 0.93 \\
\bottomrule
\end{tabular}
\caption{Comparison of CWC at 99\%, 95\%, and 90\% confidence levels across six datasets.}
\label{tab:confidence_comparison}
\end{table}
\begin{table}[h]
\centering
\begin{tabular}{l|c|c|c}
\toprule
        & \textbf{Synthetic} & \textbf{Boston} & \textbf{Sarcos} \\
\midrule
RBF(Ours)       & 1.68 & 1.75 & 1.03 \\
Matérn(Ours)    & \textbf{1.67} & \textbf{1.64} & \textbf{0.78} \\
Capone22        & 4.89 & 1.77 & 1.18 \\
Fiedler21       & 2.20 & 5.04 & 2.31 \\
Lederer19       & 2.02 & 1.78 & 1.34 \\
\midrule
        & \textbf{USGS EQ} & \textbf{Loa\_CO2} & \textbf{Auto-mpg} \\
\midrule
RBF(Ours)       & 2.59 & \textbf{1.70} & 3.06 \\
Matérn(Ours)    & \textbf{2.56} & 2.08 & 3.24 \\
Capone22        & 4.62 & 2.07 & \textbf{2.81} \\
Fiedler21       & 2.57 & 16.01 & 7.16 \\
Lederer19       & 3.07 & 1.73 & 57.22 \\
\bottomrule
\end{tabular}
\caption{Comparison of CWC across synthetic and real-world datasets of the expected upper and lower bounds over unseen points only using the training set.}
\label{tab:performance_comparison_sup}
\end{table}
\begin{figure}[h]
\begin{center}
\fbox{%
    \begin{minipage}[b]{0.15\textwidth}
        \includegraphics[width=\linewidth]{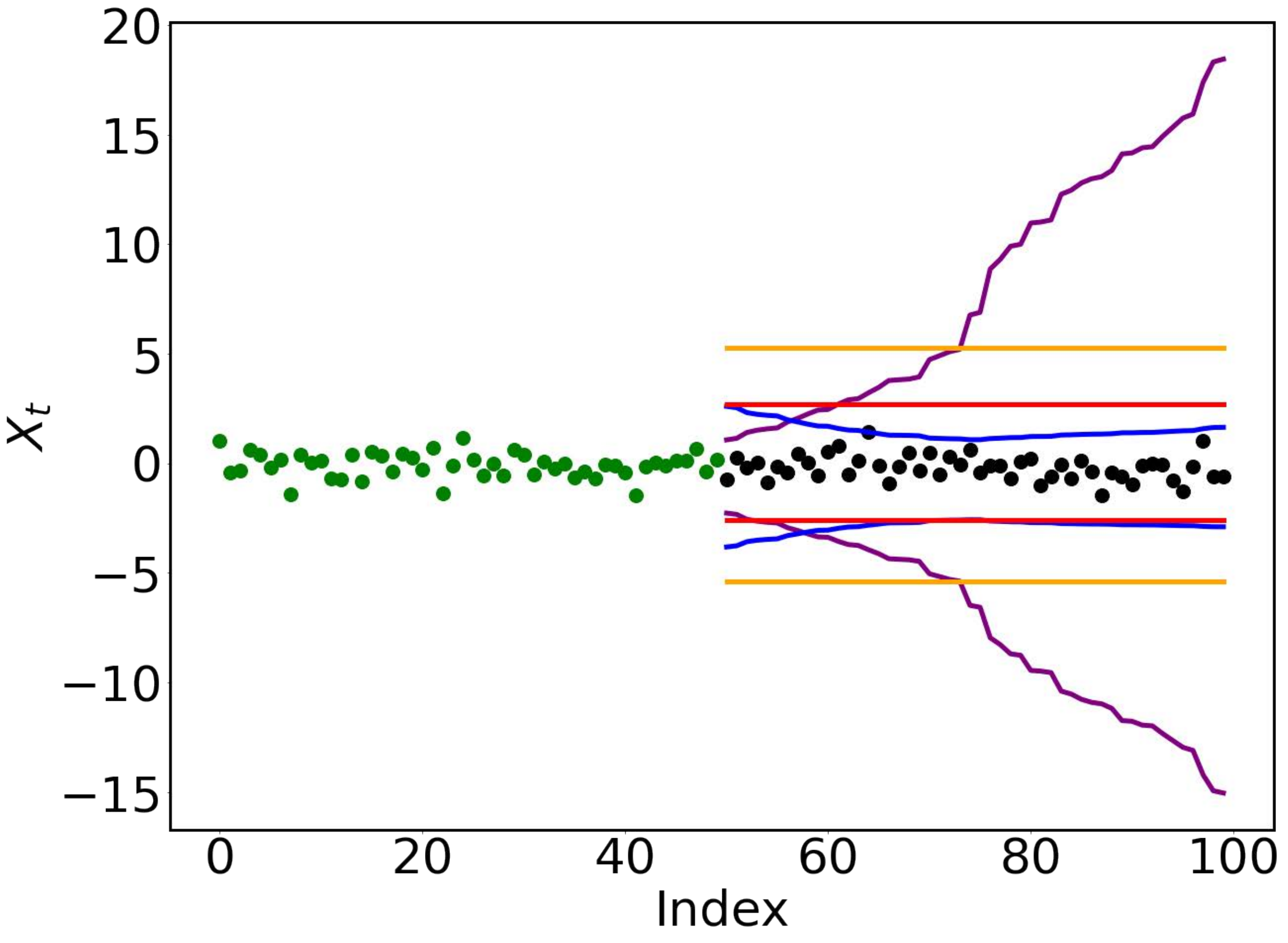}
        \centerline{(a) Synthetic}
    \end{minipage}
    \hfill
    \begin{minipage}[b]{0.15\textwidth}
        \includegraphics[width=\linewidth]{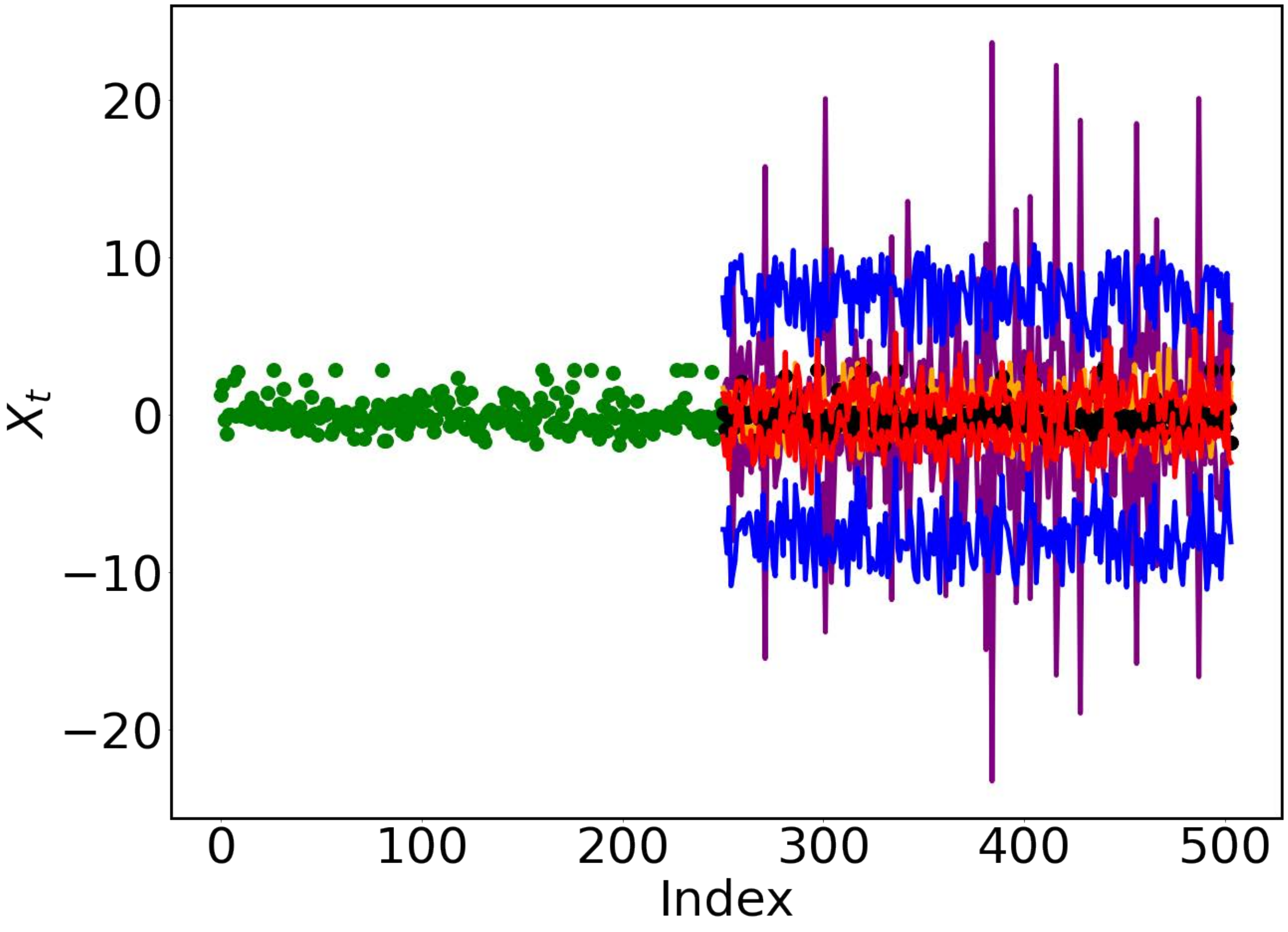}
        \centerline{(b) Boston House Price}
    \end{minipage}
    \hfill
    \begin{minipage}[b]{0.15\textwidth}
        \includegraphics[width=\linewidth]{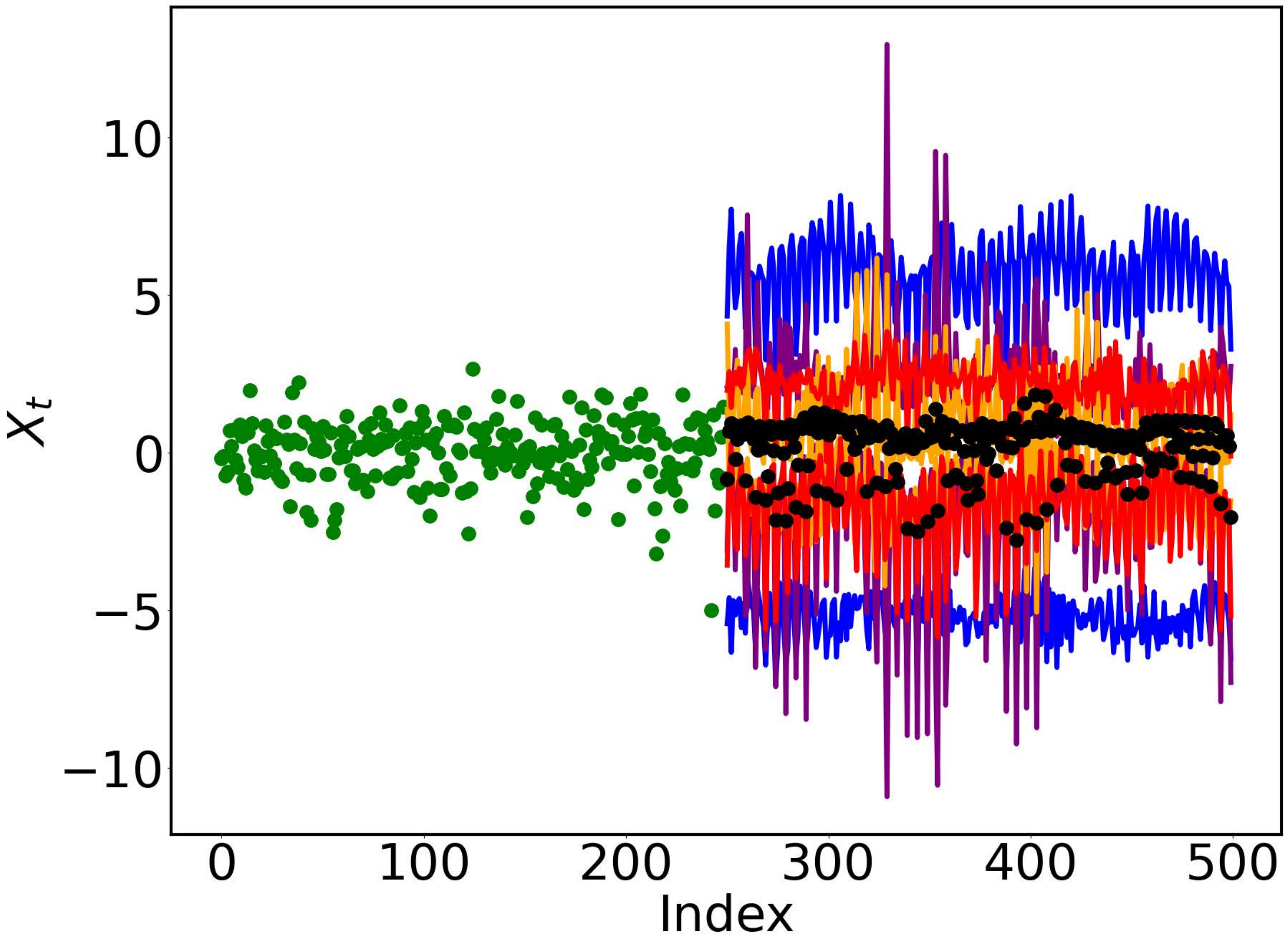}
        \centerline{(c) Sarcos}
    \end{minipage}
}
\fbox{%
    \begin{minipage}[b]{0.15\textwidth}
        \includegraphics[width=\linewidth]{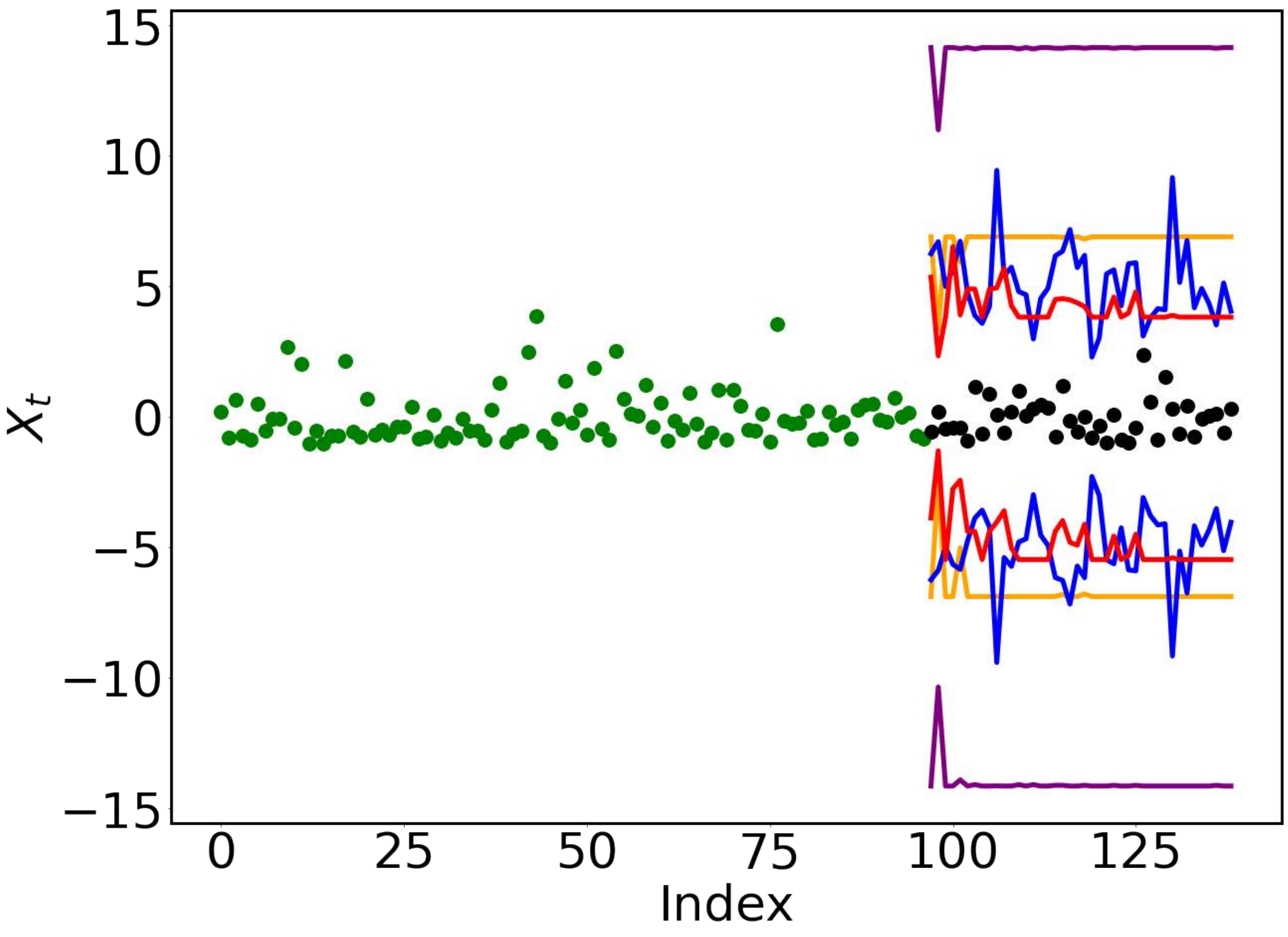}
        \centerline{(d) USGS Eq}
    \end{minipage}
    \hfill
    \begin{minipage}[b]{0.15\textwidth}
        \includegraphics[width=\linewidth]{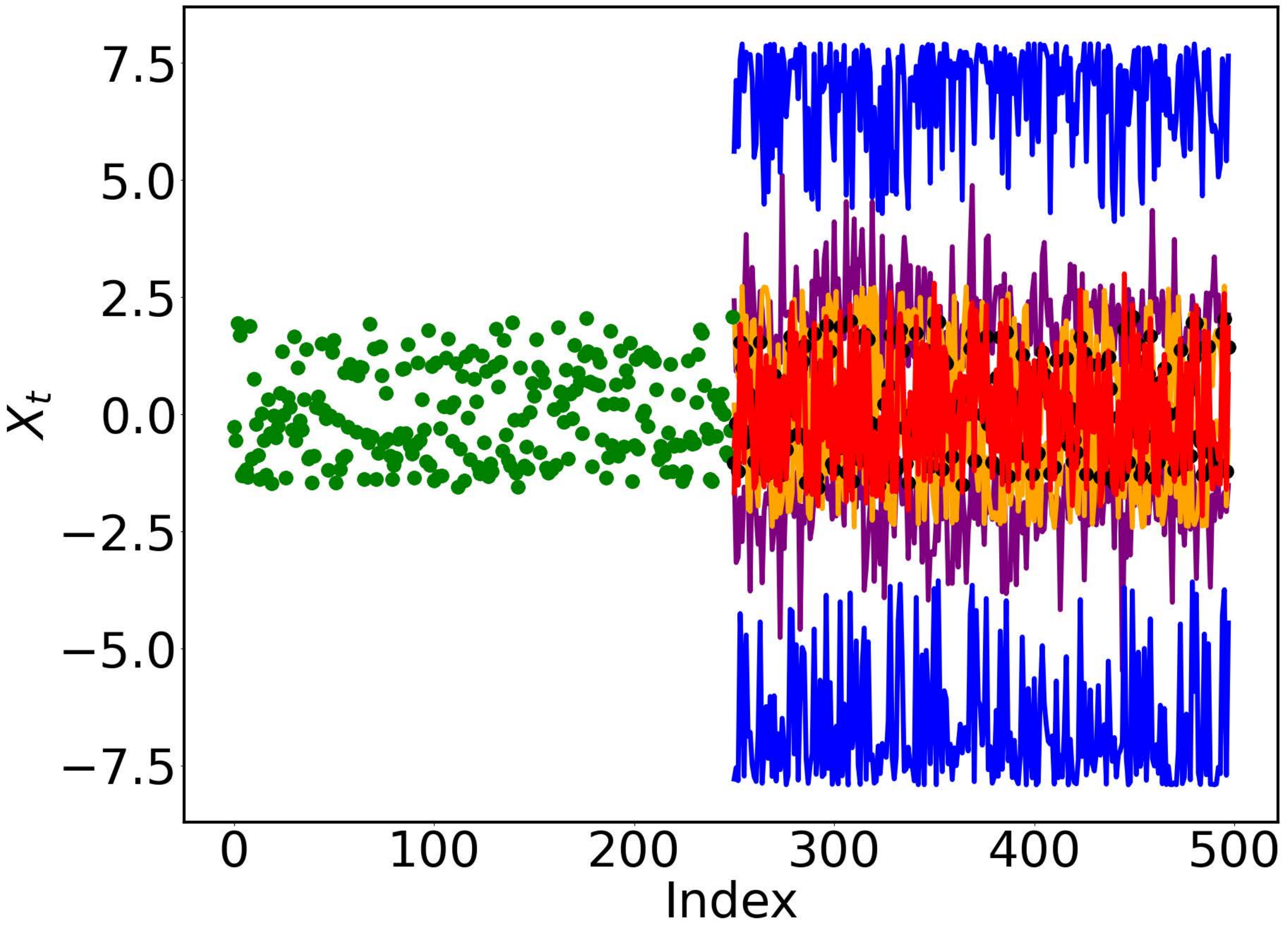}
        \centerline{(e) Loa\_CO2}
    \end{minipage}
    \hfill
    \begin{minipage}[b]{0.15\textwidth}
        \includegraphics[width=\linewidth]{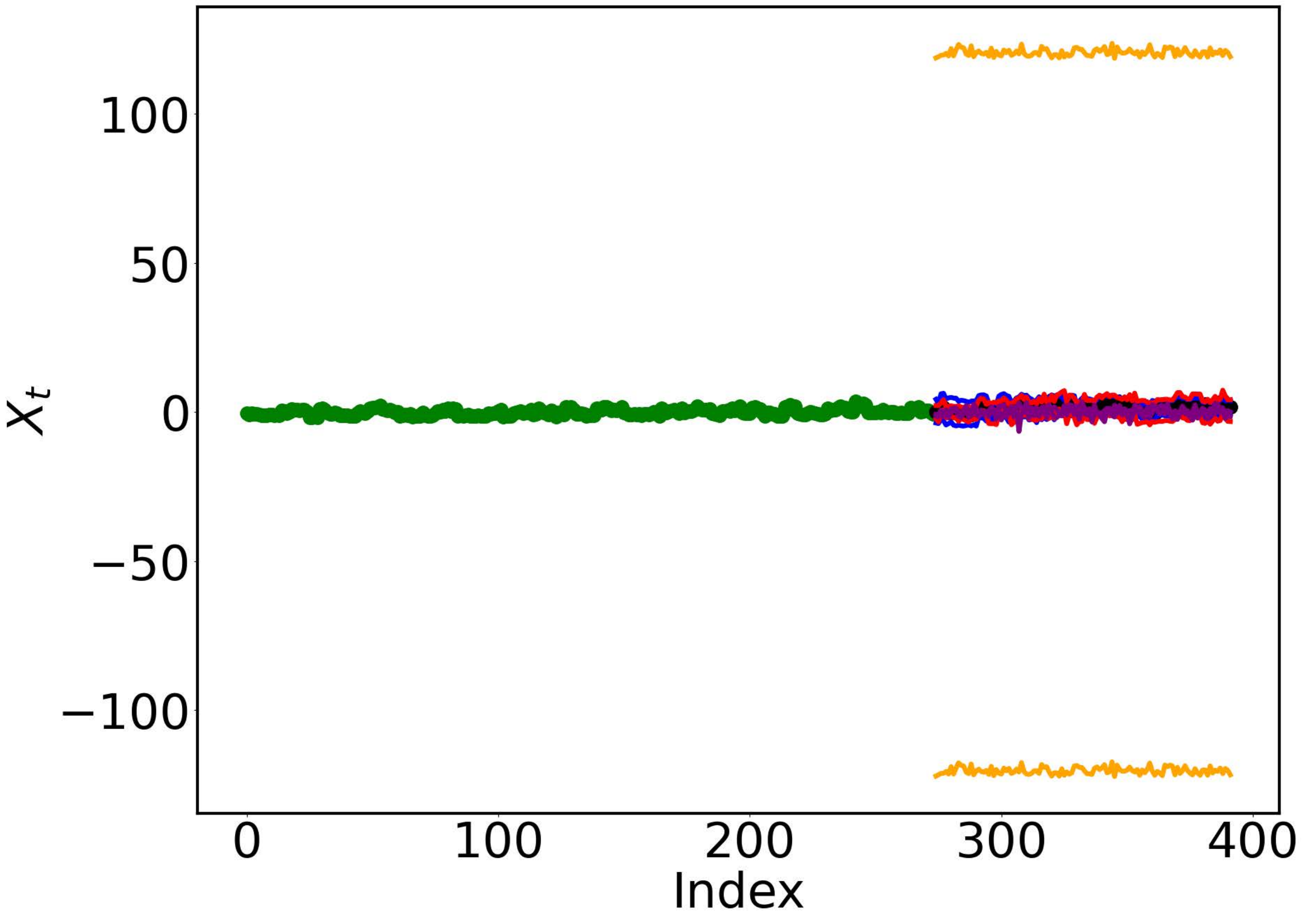}
        \centerline{(f) Auto-mpg}
    \end{minipage}
}
\end{center}
\caption{Comparison of our method with baselines for the test-point-specific bounds. The training set is in green, the test set in black, Lederer19 in orange, Fiedler21 in blue, Capone22 in purple, and our method in red.}
\label{figure:comparison}
\end{figure}
We illustrate the 99\% uncertainty bounds in Figure~\ref{figure:comparison} (large images are provided in Appendix~C.4), which corresponds to the results shown in Table~\ref{tab:confidence_comparison}. In all figures, our method achieves over 99\% coverage (with all black test points within bounds) and consistently produces tighter intervals, indicating its superior performance compared to all baselines. Computational cost results are reported in Appendix~C.1. Our method achieves competitive computational efficiency and strong scalability across datasets. 
Statistical significance results can be found in Appendix~C.5. In the vast majority of cases, our method demonstrates statistically significant improvements.

\section{Conclusion}

Our work addresses the limitations of existing methods by introducing a novel chaining-based approach that improves error control and robustness. Leveraging Talagrand’s techniques~\citep{talagrand2014upper}, 
we derive more flexible and accurate prediction bounds without relying on posterior means or variances. Our method not only yields conventional uncertainty bounds but also estimates the expected value and variance of the supremum using only the training set. Furthermore, it provides tighter bounds for commonly used kernels such as RBF and Matérn. 
% Experiments on synthetic and real-world datasets confirm the performance of our approach. 
Future work includes case analyses, point-selection strategies, and broader related work.

\section{Acknowledgments}
We thank all anonymous reviewers for their valuable feedback and constructive suggestions.

\bigskip
% \noindent Thank you for reading these instructions carefully. We look forward to receiving your electronic files!

\bibliography{main}

\newpage
\onecolumn
\appendix
\setcounter{section}{0}                % 从 A 开始
\renewcommand\thesection{\Alph{section}}

% 定义一个“带编号的附录节”命令（可用于超链接/label）
\newcommand{\appsection}[1]{%
  \refstepcounter{section}% 递增 section 计数并允许 \label
  \section*{Appendix \thesection.\ #1}% 在标题里打印 Appendix A. ...
}

% 若需要小节 A.1、A.2：
\renewcommand\thesubsection{\thesection.\arabic{subsection}}
\newcommand{\appsubsection}[1]{%
  \refstepcounter{subsection}%
  \subsection*{\thesubsection\ #1}%
}

\appsection{Chaining} 

\appsubsection{Review of Chaining} \label{A1}

Next we review at a high level the scheme of the chaining bound method as given by Talagrand.

 The goal is to bound \( \mathbb{E}Y \) where \( Y = \sup_t (X_t - X_{t_0}) \). We introduce a "good set" \( \Omega_u \) for a given parameter \( u \geq 0 \), which excludes undesirable events. As \( u \) becomes large, \( P(\Omega_u^c) \) becomes small. When \( \Omega_u \) occurs, we bound \( Y \), say \( Y \leq f(u) \), where \( f \) is an increasing function on \( \mathbb{R}_+ \).
\[
\mathbb{E}Y = \int_0^\infty P(Y \geq u)du \leq f(0) + \int_0^\infty P(Y \geq f(u))du,
\]
\[
\mathbb{E}Y = f(0) + \int_0^\infty f'(u) P(Y \geq f(u)) du,
\]
where we have used a change of variables in the last equality. Now, since \( Y \leq f(u) \) on \( \Omega_u \), we have:
\[
P(Y \geq f(u)) \leq P(\Omega_u^c),
\]
and finally:
\[
\mathbb{E}Y \leq f(0) + \int_0^\infty f'(u)P(\Omega_u^c)du.
\]
In practice, we will always have \( P(\Omega_u^c) \leq L \exp(-u/L) \) and \( f(u) = A + u^\alpha B \), yielding the bound:
\[
\mathbb{E}Y \leq A + K(\alpha)B.
\]

At the heart of this example is the introduction of a ``good set" \( \Omega_u \), which confines undesirable events to a small probability. As the parameter \( u \) increases, the probability of bad events \( \Omega_u^c \) decreases exponentially. This allows for effective error control within the ``good set," avoiding the coarse global error estimates typically used in traditional methods.

Furthermore, by controlling tail probabilities and utilizing exponential decay bounds, such as \( P(\Omega_u^c) \leq L \exp(-u/L) \), along with the function \( f(u) = A + u^\alpha B \), the chaining method ensures that the final error remains well-controlled. This level of probabilistic precision, achieved by breaking the problem into layers and managing each incremental error independently, prevents the overestimation of total error that is common in traditional approaches.

\appsubsection{An Example of Chaining} \label{A2}

\begin{figure}[ht]
\vskip 0.2in
\begin{center}
\centerline{\includegraphics[width=0.8\columnwidth]{}}
\caption{An example of why chaining helps with supremum.}
\label{example2}
\end{center}
\vskip -0.2in
\end{figure}

We illustrate the chaining method with a detailed example, as shown in the figure. The initial set \( T_0 = \{t_0\} \) maps all points to \( t_0 \) under \( \pi_0(t) \), such that \( \pi_0(t) = t_0 \) for all \( t \in T \), represented by the green region. The first layer \( T_1 = \{t_1, t_2\} \) maps points in the blue regions to either \( t_1 \) or \( t_2 \) via \( \pi_1(t) \), such that:
\[
\pi_1(t) = 
\begin{cases}
t_1, & \text{if } t \in \{t_0, t_1, t_4, t_5, t_6, t_9\}, \\
t_2, & \text{if } t \in \{t_2, t_3, t_7, t_8, t_{10}, t_{11}\}.
\end{cases}
\]
The second layer \( T_2 = \{t_4, t_6, t_7, t_{10}\} \) further refines the grouping, such that:
\[
\pi_2(t) = 
\begin{cases}
t_4, & \text{if } t \in \{t_0, t_1, t_4, t_5\}, \\
t_6, & \text{if } t \in \{t_6, t_9\}, \\
t_{10}, & \text{if } t \in \{t_2, t_{10}, t_{11}\}, \\
t_7, & \text{if } t \in \{t_3, t_7, t_8\}.
\end{cases}
\]
For example, considering \( t_{11} \), its mappings are \( \pi_0(t_{11}) = t_0 \), \( \pi_1(t_{11}) = t_2 \), and \( \pi_2(t_{11}) = t_{10} \). The decomposition of \( X_{t_{11}} - X_{t_0} \) is given by:
\[
X_{t_{11}} - X_{t_0} = (X_{t_{11}} - X_{\pi_2(t_{11})}) + (X_{\pi_2(t_{11})} - X_{\pi_1(t_{11})}) + (X_{\pi_1(t_{11})} - X_{t_0}),
\]
which simplifies to:
\[
X_{t_{11}} - X_{t_0} = (X_{t_{11}} - X_{t_{10}}) + (X_{t_{10}} - X_{t_2}) + (X_{t_2} - X_{t_0}).
\]
Each term \( X_t - X_{\pi_1(t)} \), \( X_{\pi_1(t)} - X_{\pi_2(t)} \), etc., is smaller compared to \( X_t - X_{t_0} \), making the supremum easier to compute. Furthermore, the grouping reduces the number of representative points, significantly lowering computational complexity.

\appsection{Proof of Theorems}

\appsubsection{Proof of Theorem 1} \label{proof1}

We provide a modified, more compact version to aid in exposition and intuition building. For the complete proof, please refer to Talagrand.

Assume that \((X_t)_{t \in T}\) is a Gaussian process, where each \(X_t\) is normally distributed with mean zero. For any two points \(s, t \in T\), the increment \(X_s - X_t\) is given by:
\[
E[(X_s - X_t)^2] = d(s, t)^2,
\]
where \(d(s, t)\) is a distance metric on \(T\).

Given a normally distributed random variable \(Z\) with mean zero and variance \(\sigma^2\), the probability that \(|Z|\) exceeds a threshold \(u\) is bounded by:
$
P(|Z| \geq u) \leq 2 \exp\left( -\frac{u^2}{2\sigma^2} \right)
$.
Applying this result to the increment \(X_s - X_t\), we substitute \(\sigma^2\) with \(d(s, t)^2\) and get:
\begin{align*}
    P(|X_s - X_t| \geq u) \leq 2 \exp\left( -\frac{u^2}{2d(s, t)^2} \right).
\end{align*}
This implies the expresion below when $u = u 2^{n/2} d(\pi_n(t), \pi_{n-1}(t))) $:
\begin{align*}
\text{P}(|X_{\pi_n(t)} - X_{\pi_{n-1}(t)}| 
\geq u 2^{n/2} d(\pi_n(t), \pi_{n-1}(t))) \leq  2 \exp \left( -u^2 2^{n-1} \right)  \notag
\end{align*}
The number of possible pairs \( (\pi_n(t), \pi_{n-1}(t)) \) is bounded by:
\[
|T_n| \cdot |T_{n-1}| \leq N_n N_{n-1} \leq N_{n+1} = 2^{2^{n+1}}.
\]
We define the (favorable) event $\Omega_{u,n}$ by
\begin{align}
\forall t, \, |X_{\pi_n(t)} - X_{\pi_{n-1}(t)}| \leq u2^{n/2}d(\pi_n(t), \pi_{n-1}(t)), \notag
\end{align}
and we define $\Omega_u = \bigcap_{n \geq 1} \Omega_{u,n}$. Then
\begin{align*}
p(u) := P(\Omega_u^c)  \notag
\leq \sum_{n \geq 1} P(\Omega_{u,n}^c) \leq \sum_{n \geq 1} 2 \cdot 2^{2^{n+1}} \exp(-u^2 2^{n-1}). \notag
\end{align*}
Here again, at the crucial step, we have used the union bound $P(\Omega_u^c) \leq \sum_{n \geq 1}P(\Omega_{u,n}^c)$. When $\Omega_u$ occurs, it yields
\begin{align}
|X_t - X_{t_0}| \leq u \sum_{n \geq 1} 2^{n/2} d(\pi_n(t), \pi_{n-1}(t)), \notag
\end{align}
so that
\begin{align}
\sup_{t \in T} |X_t - X_{t_0}| \leq uS,  \notag
\end{align}
where
\begin{align}
S := \sup_{t \in T} \sum_{n \geq 1} 2^{n/2} d(\pi_n(t), \pi_{n-1}(t)). \notag
\end{align}
Thus,
\begin{align}
P\left( \sup_{t \in T} |X_t - X_{t_0}| > uS \right) \leq p(u). \notag
\end{align}
Given \( n \geq 1 \) and \( u \geq 3 \), the series can be bounded by
\begin{align}
u^2 2^{n-1} \geq \frac{u^2}{2} + u^2 2^{n-2} \geq \frac{u^2}{2} + 2^{n+1}. \notag 
\end{align}

For
\begin{align*}
 p(u)
%&\leq \sum_{n \geq 1} 2 \cdot 2^{2^{n+1}} \exp(-u^2 2^{n-1}) \\ \notag
\leq  L \exp\left( -\frac{u^2}{2} \right),
% &= 2 \exp\left( -\frac{u^2}{2} \right) \sum_{n \geq 1} \frac{2^{2^{n+1}}} {e^{2^{n+1}}} \notag
\end{align*}
we observe that since \( p(u) \leq 1 \), the inequality holds not only for \(u \geq 3\) but also for \( u > 0 \), because \( 1 \leq \exp(\frac{9}{2})  \exp\left( -\frac{u^2}{2} - 2^{n+1} \right) \) for \(u \leq 3\). 
Hence,
% \begin{align}
% P \left( \sup_{t \in T} |X_t - X_{t_0}| \geq uS \right) \leq 2 \exp\left( -\frac{u^2}{2} \right) \sum_{n \geq 1} \frac{2}{e}^{2^{n+1}}
% \notag
% \end{align}
\begin{align*}
P \left( \sup_{t \in T} |X_t - X_{t_0}| \geq uS \right) \leq L \exp\left( -\frac{u^2}{2} \right) 
\end{align*}
where $L$ is an constant term.  $\square$ 

\appsubsection{Proof of Theorem 2} \label{proof2}

Given any $t_0$ in $T$, the centering hypothesis implies
\begin{align*}
    E \sup_{t \in T} X_t &= E \sup_{t \in T} (X_t - X_{t_0}).
\end{align*}
The latter form has the advantage that we now seek estimates for the expectation of the nonnegative random variable \( Y = \sup_{t \in T} (X_t - X_{t_0}) \). For such a variable, we have the formula
\begin{align*}
    E Y &= \int_0^\infty P(Y \geq u) \, du. 
\end{align*}
Using Theorem 1:
\begin{align*}
P \left( \sup_{t \in T} |X_t - X_{t_0}| \geq uS \right) \leq L \exp\left( -\frac{u^2}{2} \right) 
\end{align*}

Since
\begin{align*}
\mathbb{E} \sup_{t \in T} X_t \leq \mathbb{E}\left[ X_{t_0}\right] + \mathbb{E}\left[ \sup_{t \in T} |X_t - X_{t_0}| \right] = X_{t_0} + \mathbb{E}\left[ \sup_{t \in T} |X_t - X_{t_0}| \right]
\end{align*}
so that
\begin{align*}
\mathbb{E} \sup_{t \in T} X_t \leq  X_{t_0} + \mathbb{E}\left[ \sup_{t \in T} |X_t - X_{t_0}| \right] \leq 
% X_{t_0} +  L \sup_{t \in T} \sum_{n \geq 0} 2^{n/2}  d(t, T_n).
X_{t_0} + 
S \int_{0}^{\infty} P\left( \sup_{t \in T} |X_t - X_{t_0}| > uS \right) du.
\end{align*}
where $d(t, T_n))=\inf_{s \in T_n} \sqrt {K(t, t) + K(s, s) - 2K(t, s)} $ .

From it, to perform the integration, we introduce a new variable \(v\). Let \(v = \frac{u}{S}\), then \(du = S dv\). Thus,
\[
\mathbb{E}\left[ \sup_{t \in T} |X_t - X_{t_0}| \right] \leq L \cdot 
\int_{0}^{\infty} \exp\left(-\frac{v^2}{2}\right) S dv.
\]
Simplifying, we get:
\[
\mathbb{E}\left[ \sup_{t \in T} |X_t - X_{t_0}| \right] \leq LS \int_{0}^{\infty} \exp\left(-\frac{v^2}{2}\right) dv,
\]
where
\begin{align}
S := \sup_{t \in T} \sum_{n \geq 1} 2^{n/2} d(\pi_n(t), \pi_{n-1}(t)). \notag
\end{align}
This integral is a standard Gaussian integral, and the result is:
\[
\int_{0}^{\infty} \exp\left(-\frac{v^2}{2}\right) dv = \sqrt{\frac{\pi}{2}}.
\]
Since $\pi_n(t)$ approximates $t$, it is natural to assume that:
\begin{align}
d(t, \pi_n(t)) = d(t, T_n) := \inf_{s \in T_n} d(t, s). \notag
\end{align}
The triangle inequality yields:
\begin{align*}
d(\pi_n(t), \pi_{n-1}(t)) \leq d(t, \pi_n(t)) + d(t, \pi_{n-1}(t)) = d(t, T_n) + d(t, T_{n-1}),
\end{align*}
Making the change of variable \(n \leftarrow n + 1\) in the second sum below, we obtain:
\begin{align*}
S & = \sup_{t \in T} \sum_{n \geq 1} 2^{n/2} d(\pi_n(t), \pi_{n-1}(t)) \\ \notag
&\leq \sup_{t \in T} \sum_{n \geq 1} 2^{n/2} d(t, T_{n}) + \sup_{t \in T} \sum_{n \geq 1} 2^{n/2} d(t, T_{n-1}) \\ \notag
&= \sup_{t \in T} \sum_{n \geq 0} 2^{n/2} d'(t, T_n) + \sqrt{2} \sup_{t \in T} \sum_{n \geq 1}  2^{(n-1)/2} d(t, T_{n-1}) \\ \notag
&= \sup_{t \in T} \sum_{n \geq 0} 2^{n/2} d'(t, T_n) + \sqrt{2} \sup_{t \in T} \sum_{n \geq 0} 2^{n/2} d(t, T_n) \\ \notag
&\leq (1 + \sqrt{2}) \sup_{t \in T} \sum_{n \geq 0}  2^{n/2} d(t, T_n).
\end{align*}
Thus, the result is:
% \[
% \mathbb{E}\left[ \sup_{t \in T} |X_t - X_{t_0}| \right] \leq (1+\sqrt{2}) \sqrt{\frac{\pi}{2}}  L \sup_{t \in T} \sum_{n \geq 0} 2^{n/2}  d(t, T_n).
% \] 
\[
\mathbb{E}\left[ \sup_{t \in T} |X_t - X_{t_0}| \right] \leq (1 + \sqrt{2}) L \sup_{t \in T} \sum_{n \geq 0} 2^{n/2}  d(t, T_n).
\] 
\begin{align*}
\mathbb{E} \sup_{t \in T} X_t \leq  X_{t_0} + \mathbb{E}\left[ \sup_{t \in T} |X_t - X_{t_0}| \right] \leq 
% X_{t_0} +  L \sup_{t \in T} \sum_{n \geq 0} 2^{n/2}  d(t, T_n).
X_{t_0} + 
 (1 + \sqrt{2}) L \sup_{t \in T} \sum_{n \geq 0} 2^{n/2}  d(t, T_n). \quad\quad\square
\end{align*}

\appsubsection{Proof of Theorem 3}\label{proof3}

A common kernel used in GPR is the radial basis function (RBF) kernel, also known as the Gaussian kernel. In this context, we consider a composite kernel that combines a constant kernel with an RBF kernel. 
The constant kernel \( \sigma^2 \) adds a constant variance to the covariance matrix, helping to control the overall amplitude of the process. 
The combined kernel function is expressed as:
\[
K(s, t) = \sigma^2  \exp\left( -\frac{\|s - t\|^2}{2l^2} \right).
\]
By substituting \(K(s, s) = K(t, t) = 1\) and the kernel function \(K(s, t)\) into the distance formula, we obtain:
\[
d(s, t)^2 =  2 \sigma^2 (1 -  \exp\left(-\frac{\|s - t\|^2}{2l^2}\right) ).
\]
Using the Cauchy-Schwarz inequality In two-dimensional space, we get:  
\begin{align*}
    \frac{\|s - t\|^2 + \|t - u\|^2}{2} \geq \left( \frac{\|s - t\| + \|t - u\|}{2} \right)^2.
\end{align*}
Combined with the triangle inequality $\|s - t\| + \|t - u\| \geq \|s - u\|$, we then obtain: 
\[
\|s - t\|^2 + \|t - u\|^2  \geq \frac{\|s - u\|^2}{2}.
\]
Thus the distance is:
\begin{align*}
d(s, u)^2  
%= 2 \left(1 - \exp\left(-\frac{\|s - u\|^2}{2l^2}\right)\right) \\ 
\leq 2 \sigma^2  \left(1 - \exp\left(-\frac{\|s - t\|^2 + \|t - u\|^2}{l^2}\right)\right). \\ \notag 
\end{align*}
Recall that the Taylor series expansion of \(\exp(x)\) is:
   \[
   \exp(x) = 1 + x + \frac{x^2}{2!} + \frac{x^3}{3!} + \cdots.
   \]
Let $x_{1} =-\frac{\|s - t\|^2}{l^2}$ and $x_{2} =-\frac{\|t - u\|^2}{l^2}$. We then get:
   \begin{align*}
   \exp(x_{1}) + \exp(x_{2}) -1  
   = 1 + (x_{1} + x_{2}) + \frac{x_{1}^2 + x_{2}^2}{2!} + \frac{x_{1}^3 + x_{2}^3}{3!} + \cdots \\ \notag
   \leq 1 + (x_{1}  + x_{2}) + \frac{(x_{1} + x_{2})^2}{2!} + \frac{(x_{1} + x_{2})^3}{3!} + \cdots 
   = \exp(x_{1} + x_{2}).
   \end{align*}
For this inequality, we provide another simpler proof: Given that \( x_1, x_2 \geq 0 \), it follows that \( \exp(x_1) \geq 1 \) and \( \exp(x_2) \geq 1 \). Therefore, \( (1 - \exp(x_1))(1 - \exp(x_2)) \geq 0 \), i.e., \( 1- \exp(x_{1}) - \exp(x_{2}) + \exp(x_{1} + x_{2}) \geq 0  \).

   % Therefore, we can prove:
   %  \begin{align}
   % - \exp(a + b) \leq 1 - \exp(a) - \exp(b)  \\ \notag
   % \end{align}
By using this, we have:
\begin{align*}
d(s, u)^2 & = 2 \sigma^2 \left(1 - \exp\left(x_{1} + x_{2}\right)\right) \\
&\leq 2 \sigma^2  + 2 \sigma^2   (1 - \exp\left(x_{1} \right) - \exp\left(x_{2}\right) ) \\ \notag 
% &= 2 \left(1 - \exp\left(a\right) + 1 - \exp\left(b\right) \right)  \\ \notag 
&= 2 \sigma^2 ( 2- \exp^{\frac{1}{2}} \left(-\frac{\|s - t\|^2}{2l^2}\right) - \exp ^{\frac{1}{2}} \left(-\frac{\|t - u\|^2}{l^2}\right) )  \\ \notag 
% &=  K(s, s) +  K(t, t) - 2 K^{1/2}(s, t) \\
% &+ K(t, t) + K(u, u) - 2K^{1/2}(t, u)
&=  4 \sigma^2  - 2 \sigma K^{\frac{1}{2}}(s, t) - 2 \sigma K^{\frac{1}{2}}(t, u) \\
& = 2 \sigma^2 - 2 \sigma K^{\frac{1}{2}}(s, t) + 2 \sigma^2  - 2 \sigma K^{\frac{1}{2}}(t, u) \\
& = d'(s, t)^2 + d'(t, u)^2. 
\end{align*}
where $d'(s, t)^2 = K(s, s) + K(t, t) - 2 \sigma K^{\frac{1}{2}}(s,t)$.

Since $\pi_n(t)$ approximates $t$, it is natural to assume that:
\begin{align}
d(t, \pi_n(t)) = d(t, T_n) := \inf_{s \in T_n} d(t, s). \notag
\end{align}
For an RBF kernel, we have:
\begin{align*}
d(s, u)^2 & \leq  d'^2(s, t) + d'^2(t, u),
\end{align*}
where $d'(s, t)^2 = K(s, s) + K(t, t) - 2 \sigma K^{\frac{1}{2}}(s,t)$.

Making the change of variable \(n \leftarrow n + 1\) in the second sum below, we obtain:
\begin{align*}
S & = \sup_{t \in T} \sum_{n \geq 1} 2^{n/2} d(\pi_n(t), \pi_{n-1}(t)) \\ \notag
&\leq \sup_{t \in T} \sum_{n \geq 1} 2^{n/2} d'(t, T_{n}) + \sup_{t \in T} \sum_{n \geq 1} 2^{n/2} d'(t, T_{n-1}) \\ \notag
&= \sup_{t \in T} \sum_{n \geq 0} 2^{n/2} d'(t, T_n) + \sqrt{2} \sup_{t \in T} \sum_{n \geq 1}  2^{(n-1)/2} d'(t, T_{n-1}) \\ \notag
&= \sup_{t \in T} \sum_{n \geq 0} 2^{n/2} d'(t, T_n) + \sqrt{2} \sup_{t \in T} \sum_{n \geq 0} 2^{n/2} d'(t, T_n) \\ \notag
&\leq (1 + \sqrt{2}) \sup_{t \in T} \sum_{n \geq 0}  2^{n/2} d'(t, T_n).
\end{align*}
Using Theorem~2, we obtain the fundamental bound:
% \begin{align}
% E \sup_{t \in T} X_t \leq  (1 + \sqrt{2})  \sup_{t \in T} \sum_{n \geq 0} 2^{n/2} d'(t, T_n) \cdot 2 \sum_{n \geq 1} \frac{2}{e}^{2^{n+1}}  \notag
% \end{align}
\begin{align*}
\mathbb{E} \sup_{t \in T} |X_t - X_{t_0}| \leq  
% (1 + \sqrt{2}) \sqrt{\frac{\pi}{2}} 
L \sup_{t \in T} \sum_{n \geq 0} 2^{n/2} d'(t, T_n),
\end{align*}
% where $d'(t, T_n)) = \sqrt {K(t, t) + K(\pi_n(t), \pi_n(t)) - 2K^2(t, \pi_n(t))}$
where 
\begin{align*}
d'(t, T_n)) = \inf_{s \in T_n} \sqrt {K(t, t) + K(s, s) - 2 \sigma K^{\frac{1}{2}}(t, s)}.  \quad\quad\square
\end{align*}

\appsubsection{Proof of Theorem~4}\label{proof4}

Since \(K(s, t) = \left(1 + \frac{\sqrt{3} \|s - t\|}{l} \right) \exp \left(-\frac{\sqrt{3} \|s - t\|}{l} \right) \), we have \(K(s, s) = K(t, t) = 1\).

By substituting \(K(s, s) = K(t, t) = 1\) and the kernel function \(K(s, t)\) into the distance formula, we obtain:
\[
d(s, t)^2 = 2 - 2\left(1 + \frac{\sqrt{3} \|s - t\|}{l} \right) \exp \left(-\frac{\sqrt{3} \|s - t\|}{l} \right).
\]

The Chebyshev's sum inequality is a fundamental result in the theory of inequalities. It states that if \(a_1, a_2\) and \(b_1, b_2\) are two sequences of real numbers that are sorted in opposite orders (one in increasing and the other in decreasing order), then the following inequality holds:
\begin{align*}
    \frac{1}{n} \sum_{i=1}^n a_i b_i \leq \left( \frac{1}{n} \sum_{i=1}^n a_i \right) \left( \frac{1}{n} \sum_{i=1}^n b_i \right).
\end{align*}
Specifically, for \( a_i = 1 + x_i \) and \( b_i = \exp(-x_i) \), which are oppositely sorted, let $x_1= \frac{\sqrt{3} \| s-t\|}{l} $ and $x_2= \frac{\sqrt{3} \| t-u\|}{l} $. Then the inequality for \(n=2\) becomes:
\begin{align*}
(1 + x_1)\exp(-x_1) + (1 + x_2)\exp(-x_2) 
\leq \frac{(1 + x_1 + 1 + x_2)[\exp(-x_1) + \exp(-x_2)]}{2}.
\end{align*}
Since $\| s-t\| \geq 0$ and $\| t-u\| \geq 0$, we have \( \exp(-x_i) \leq 1 \). Oberve that $\mbox{$(1 - \exp(-x_1))(1 - \exp(-x_2)) > 0$}$. Rearranging terms, we obtain:
\begin{align*}
 \exp(-x_1) + \exp(-x_2) 
< 1 + \exp(-x_1) \exp(-x_2) = 1 + \exp(-x_1-x_2).
\end{align*}
Using this, we get:
\begin{align*}
(1 + x_1)\exp(-x_1) + (1 + x_2)\exp(-x_2) &\leq \frac{(2 + x_1 + x_2)}{2} [\exp(-x_1) + \exp(-x_2) ] \\
&\leq \frac{(2 + x_1 + x_2)}{2} [1 + \exp(-x_1-x_2)]. \\
\end{align*}
After negating $\frac{(2 + x_1 + x_2)}{2} $, we get:
\begin{align*}
& (1 + x_1) [\exp(-x_1) - \frac{1}{2}] + (1 + x_2)[\exp(-x_2) - \frac{1}{2}] \leq (1 + x_1 + x_2) \exp(-x_1-x_2). \\
\end{align*}
Given the function $ f(x) = (1 + x) \exp(-x) $, the derivative of \( f(x) \) with respect to \( x \) is calculated using the product rule as:
\[ f'(x) = \frac{d}{dx} \left[ (1 + x) \exp(-x) \right] = - x \exp(-x). \]

Since $ \frac{\sqrt{3} \|s - t\|}{l} \geq 0$, we know that $f'(x) \leq 0$ when $x\geq 0 $. Thus $f(x)$is monotonically decreasing when $n\geq0$. 

With the triangle inequality $\|s - t\| + \|t - u\| \geq \|s - u\|$, and since $f(x)$ is monotonically decreasing, we get:
\begin{align*}
& K(s,u) = \left(1 + \frac{\sqrt{3} \| s-u\|}{l} \right) 
\exp \left(-\frac{\sqrt{3} \|s-u\|}{l} \right) \\
% &\geq \left(1 + \frac{\sqrt{3} (\| s-t\| + \| t-u\|)}{l} \right) \\
% & \cdot \exp \left(-\frac{\sqrt{3}  (\| s-t\| + \| t-u\|)}{l} \right) \\
&\geq (1 + x_1 + x_2) \exp(-x_1-x_2) \\
&\geq (1 + x_1) [\exp(-x_1) - \frac{1}{2}] + (1 + x_2)[\exp(-x_2) - \frac{1}{2}] \\
& = K'(s,t) + K'(t,u),
\end{align*}
where $K'(s,t) = \left(1 + \frac{\sqrt{3} \| s-t\|}{l} \right) [\exp \left(-\frac{\sqrt{3} \|s-t\|}{l} \right) - \frac{1}{2}].\\ $
We can then calculate the distance:
\begin{align*}
d(s, u)^2  
&= K(s, s) + K(u, u) - 2K(s, u) \\
&\leq 2 - 2 [ K'(s,t) + K'(t,u) ] =  2 - 2K'(s,t) + 2 - 2K'(t,u) - 2 \\
&= d'(s, t)^2  + d'(t, u)^2  -2.
\end{align*}
For the Matérn kernel (with $v=\frac{3}{2}$), we have proven that:
\begin{align*}
d(s, u)^2 \leq  d'(s, t)^2 + d'(t, u)^2 -2,
\end{align*}
where $d'(s, t)^2 = K(s, s) + K(t, t) - 2K'(s,t)$.

Making the change of variable \(n \leftarrow n + 1\) in the second sum below, we get:
\begin{align*}
S &\leq \sup_{t \in T} \sum_{n \geq 1} 2^{n/2} \sqrt{d'^2(t, T_{n}) + d'^2(t, T_{n-1})  - 2 } \\
% &\leq  \sup_{t \in T} \sum_{n \geq 0} 2^{n/2}  \sqrt{3 d'^2(t, T_n)- 2}  \\
% &\leq \sqrt{3} \sup_{t \in T} \sum_{n \geq 0}  2^{n/2}  ( d'(t, T_n) - \frac{\sqrt{2}}{\sqrt{3}})
&\leq \sup_{t \in T} \sum_{n \geq 1} 2^{n/2} d'(t, T_n) + \sqrt{2} \sup_{t \in T} \sum_{n \geq 0} 2^{n/2} d'(t, T_n)- \sum_{n \geq 0} 2^{n/2} \sqrt{2}  \\
&\leq (1 + \sqrt{2})\sup_{t \in T} \sum_{n \geq 0}  2^{n/2}  ( d'(t, T_n) - \frac{\sqrt{2}}{1 + \sqrt{2}}).
\end{align*}
Using Theorem~2, we have the bound:
\begin{align*}
% E \sup_{t \in T} X_t \leq  \sqrt{3} L  \sup_{t \in T} \sum_{n \geq 0} 2^{n/2} [d'(t, T_n)- \frac{\sqrt{2}}{\sqrt{3}}]  
\mathbb{E} \sup_{t \in T} |X_t - X_{t_0}| \leq  
% (1 + \sqrt{2}) \sqrt{\frac{\pi}{2}} 
L  \sup_{t \in T} \sum_{n \geq 0} 2^{n/2} [d'(t, T_n)+\sqrt{2}-2],
\end{align*}
% \begin{align}
% E \sup_{t \in T} X_t \leq  (1 + \sqrt{2})  \sup_{t \in T} \sum_{n \geq 0} 2^{n/2} [d(t, T_n)+\sqrt{2}-2] \cdot 2 \sum_{n \geq 1} \frac{2}{e}^{2^{n+1}}  \notag
% \end{align}
where $\mbox{$d'(t, T_n))=\inf_{s \in T_n} \sqrt {K(t, t) + K(s, s) - 2K'(t, s)}$}$, and 
$\mbox{$K'(s,t) = \left(1 + \frac{\sqrt{3} \| s-t\|}{l} \right) \left[\exp \left(-\frac{\sqrt{3} \|s-t\|}{l} \right) - \frac{1}{2}\right]$}$. $\quad\quad\square$

\appsubsection{Proof of Theorem~5}\label{proof5}

Given any $t_0 \in T$, the centering hypothesis implies
\[
\mathbb{E} \sup_{t \in T} X_t = \mathbb{E} \sup_{t \in T} (X_t - X_{t_0}) \leq X_{t_0} + \mathbb{E}[Y],
\]
where \( Y := \sup_{t \in T} |X_t - X_{t_0}| \) is a nonnegative random variable. By standard results,
\[
\mathbb{E}[Y] = \int_0^\infty \mathbb{P}(Y \geq u) \, du.
\]
Given a normally distributed random variable \(Z\) with mean zero and variance \(\sigma^2\), the probability that \(|Z|\) exceeds a threshold \(u\) is bounded by:
$
P(|Z| \geq u) \leq 2 \exp\left( -\frac{u^2}{2\sigma^2} \right)
$.
Applying this result to the increment \(X_s - X_t\), we substitute \(\sigma^2\) with \(d(s, t)^2\) and get:
\begin{align*}
    P(|X_s - X_t| \geq u) \leq 2 \exp\left( -\frac{u^2}{2d(s, t)^2} \right).
\end{align*}
This implies the expresion below when $u = u 2^{n/2} d(\pi_n(t), \pi_{n-1}(t))) $:
\begin{align*}
\text{P}(|X_{\pi_n(t)} - X_{\pi_{n-1}(t)}| 
\geq u 2^{n/2} d(\pi_n(t), \pi_{n-1}(t))) \leq  2 \exp \left( -u^2 2^{n-1} \right)  \notag
\end{align*}
The number of possible pairs \( (\pi_n(t), \pi_{n-1}(t)) \) is bounded by:
\[
|T_n| \cdot |T_{n-1}| \leq N_n N_{n-1} \leq N_{n+1} = 2^{2^{n+1}}.
\]
We define the (favorable) event $\Omega_{u,n}$ by
\begin{align}
\forall t, \, |X_{\pi_n(t)} - X_{\pi_{n-1}(t)}| \leq u2^{n/2}d(\pi_n(t), \pi_{n-1}(t)), \notag
\end{align}
and we define $\Omega_u = \bigcap_{n \geq 1} \Omega_{u,n}$. Then
\begin{align*}
p(u) := P(\Omega_u^c)  \notag
\leq \sum_{n \geq 1} P(\Omega_{u,n}^c) \leq \sum_{n \geq 1} 2 \cdot 2^{2^{n+1}} \exp(-u^2 2^{n-1}). \notag
\end{align*}
From the chaining construction, we have the high-probability bound:
\[
\mathbb{P}\left( \sup_{t \in T} |X_t - X_{t_0}| > u S \right) \leq p(u),
\]
with
\[
S := \sup_{t \in T} \sum_{n \geq 1} 2^{n/2} d(\pi_n(t), \pi_{n-1}(t)), \quad
p(u) := \sum_{n \geq 1} 2^{2^{n+1}+1} \exp(-u^2 2^{n-1}).
\]
Since \(p(u)\) is defined as the probability of the event \(\Omega_u^c\), it is bounded above by 1. However, the upper bound derived via union bound can exceed 1 when \(u\) is small, so we apply:
\[
\mathbb{E}[Y] \leq \int_0^\infty \min(p(u/S), 1) \, du.
\]
Letting \(v = u/S\), we change variable to get:
\[
\mathbb{E}[Y] \leq S \int_0^\infty \min(p(v), 1) \, dv.
\]
Thus, the final upper bound becomes:
\[
\mathbb{E} \sup_{t \in T} X_t \leq X_{t_0} + S \int_0^\infty \min\left( \sum_{n \geq 1} 2^{2^{n+1}+1} \exp(-v^2 2^{n-1}), 1 \right) dv. \quad\quad\square
\]
Proof of Theorem~3 and Proof of Theorem~4, we have $S \leq (1 + \sqrt{2}) \sup_{t \in T} \sum_{n \geq 0} 2^{n/2} inf_{s \in T_n} \sqrt {K(t, t) + K(s, s) - 2 \sigma K^{\frac{1}{2}}(t, s)} $ for RBF kernel; $S \leq (1 + \sqrt{2}) \sup_{t \in T} \sum_{n \geq 0} 2^{n/2} [\inf_{s \in T_n} \sqrt {K(t, t) + K(s, s) - 2K'(t, s)}+\sqrt{2}-2] $, and
$\mbox{$K'(s,t) = \left(1 + \frac{\sqrt{3} \| s-t\|}{l} \right) \left[\exp \left(-\frac{\sqrt{3} \|s-t\|}{l} \right) - \frac{1}{2}\right]$}$ for Matérn kernel.  

Compared to Talagrand's chaining bounds, our construction avoids introducing a potentially large prefactor \(L\) (e.g., \(L = \exp(9/2)\)) that arises in analytic derivations. Instead, we directly bound the tail integral numerically, truncating values exceeding 1. This is justified because \(p(v)\) corresponds to a probability, and is observed to be well below 1 for moderate \(v\), yielding a tighter and more practical bound.

% Furthermore, we do not use the triangle inequality to estimate \(d(\pi_n(t), \pi_{n-1}(t))\). Instead, we explicitly compute the projection points \(\pi_n(t)\) and \(\pi_{n-1}(t)\), thereby reducing approximation error in the chaining sum. As a result, our method achieves a numerically tighter upper bound on \(\mathbb{E}[\sup_{t \in T} X_t]\), while remaining faithful to the chaining framework.

\appsubsection{Proof of Theorem~6}\label{proof6}

Given a normally distributed random variable \(Z\) with mean zero and variance \(\sigma^2\), the probability that \(|Z|\) exceeds a threshold \(u\) is bounded by:
\[
P(|Z| \geq u) \leq 2 \exp\left( -\frac{u^2}{2\sigma^2} \right)
\]
Applying this result to the increment \(X_s - X_t\), we substitute \(\sigma^2\) with \(d(s, t)^2\) and get:
\[
    P(|X_s - X_t| \geq u) \leq 2 \exp\left( -\frac{u^2}{2d(s, t)^2} \right).
\]
This implies the expresion below when $u = u 2^{n/2} d(\pi_n(t), \pi_{n-1}(t))) $:
\begin{align*}
\text{P}(|X_{\pi_n(t)} - X_{\pi_{n-1}(t)}| 
\geq u 2^{n/2} d(\pi_n(t), \pi_{n-1}(t))) \\
\leq  2 \exp \left( -u^2 2^{n-1} \right)  
\end{align*}
 By introducing the confidence parameter 
%  \( \delta \), the probability inequality can be rewritten as
% \[
% P\left(|X_{\pi_n(t)} - X_{\pi_{n-1}(t)}| \geq u 2^{n/2} d(\pi_n(t), \pi_{n-1}(t))\right) = \delta,
% \]
% where 
\( \delta = 2 \exp(-u^2 2^{n-1}) \). Taking the natural logarithm of both sides yields
$
\ln(\delta) = \ln(2) - u^2 2^{n-1}.
$
Rearranging for \( u^2 \), we obtain
$
u = \sqrt{\frac{\ln(2) + \ln(1/\delta)}{2^{n-1}}}.
$

Substituting this expression for \( u \) back into the inequality for \( |X_s - X_t| \), we have:
\[
|X_s - X_t| \leq u 2^{n/2} d(s, t),
\]
Since $\pi_n(t)$ approximates $t$, it is natural to assume that:
\begin{align*}
d(t, \pi_n(t)) = d(t, T_n) := \inf_{s \in T_n} d(t, s). 
\end{align*} 
Using the expression for \( u \), the bound becomes
\[
|X_t - X_{\pi_n(t)}| \leq \sqrt{\frac{\ln(2) + \ln(1/\delta)}{2^{n-1}}} \cdot 2^{n/2} \cdot d(t, T_n).  \quad\quad\square
\]

\appsection{More Experimental Results and Analysis}
\label{app:expt}

\appsubsection{Computational Cost and Scalability}  \label{A20}

The proposed method has three primary computational steps: fitting the Gaussian process, constructing the sets \( \{T_n\}\), and computing bounds for the test points. Fitting the Gaussian process involves matrix factorization with a complexity of \( \mathcal{O}(|D_{\text{train}}|^3) \). Constructing \( \{T_n\} \) requires \( \mathcal{O}(|D_{\text{train}}|^2 \cdot \log \log |D_{\text{train}}|) \), dominated by kernel distance computations. Finally, computing bounds for \( |D_{\text{test}}| \) test points has a complexity of \( \mathcal{O}(|D_{\text{test}}| \cdot |D_{\text{train}}| \cdot \log \log |D_{\text{train}}|) \). The total computational complexity depends on the relative sizes of the training and test sets. Since the sizes of the training and test sets can vary, the overall complexity is determined by the more computationally intensive step. Thus, the total time complexity is:
\( \mathcal{O}(\max(|D_{\text{train}}|^2 \cdot \log \log |D_{\text{train}}|, |D_{\text{test}}| \cdot |D_{\text{train}}| \cdot \log \log |D_{\text{train}}|))\).

We also evaluated computational cost and scalability, with the table detailing data size and runtime for each method. All numerical experiments in this section were conducted on a Linux system with kernel version 5.15.0-112-generic (\#122-Ubuntu SMP Thu May 23 07:48:21 UTC 2024). The machine configuration includes an x86\_64 processor with 16 CPU cores and 125.49 GB of RAM. 

\begin{table}[h]
\centering
\begin{tabular}{c|cccccc}
\hline
 & Synthetic & Boston House  & Sarcos & USGS Earthquake & Loa CO2 & Autompg
 % Airfoil Self-Noise 
 \\ \hline
Train Data Size & 50 & 250 & 250  & 97 & 250 & 274
% 1052 
\\ \hline
Test Data Size & 50 & 254 & 4000 & 42  & 248  & 118
%451 
\\ \hline
% Data size & 100 & 504 & 365 & 4250 \\ \hline
\end{tabular}
\caption{Size of datasets.}
\label{table:size}
\end{table}

% \begin{table}[h]
% \centering
% \begin{tabular}{c|cccc}
% \hline
% Time(s) & Synthetic Data & Boston House Price & NOAA Weather & Sarcos \\ \hline
% % Data size & 50+50 & 250+254 & 255+110 & 250+4000 \\ \hline
% % Data size & 100 & 504 & 365 & 4250 \\ \hline
% RBF(Ours) & 0.0524 & \textbf{0.5281} & 1.9504 & 1.7396 \\ \hline
% Matérn(Ours) & \textbf{0.0463} & 0.7753 & \textbf{0.4038} & 1.7545 \\ \hline
% Capone22 & 30.6893 & 149.6288 & 5.8267 & 343.5373 \\ \hline
% Fiedler21 & 0.0787 & 0.7555 & 1.3363 & \textbf{1.1845} \\ \hline
% Lederer19 & 0.5629 & 2.3085 & 1.9160 & 2.8660 \\ \hline
% \end{tabular}
%     \caption{Computational Cost and Scalability of Our Method with Baselines in Synthetic Data.}
%     \label{table:compute}
% \end{table}

\begin{table}[h]
\centering
\begin{tabular}{c|cccccc}
\hline
Time(s) & Synthetic& Boston House &  Sarcos & USGS Earthquake & Loa CO2 & Autompg \\ \hline
% Data size & 50+50 & 250+254 & 255+110 & 250+4000 \\ \hline
% Data size & 100 & 504 & 365 & 4250 \\ \hline
RBF(Ours) & 0.05 & \textbf{0.52} &  1.74 & 0.48 & 2.38 & \textbf{1.48} \\ \hline
Matérn(Ours) & \textbf{0.04} & 0.77  & 1.75 & 0.69 & 3.01 & 1.71 \\ \hline
Capone22 & 30.68 & 149.63  & 343.54 & 6.51 & 100.43 & 9.02 \\ \hline
Fiedler21 & 0.07 & 0.75 & \textbf{1.18} & \textbf{0.29} & \textbf{1.20} & 1.70\\ \hline
Lederer19 & 0.56 & 2.31 & 2.86 &  4.23 & 5.37 & 2.59\\ \hline
\end{tabular}
    \caption{Computational cost and scalability of our method with baselines in synthetic and real-world datasets.}
    \label{table:compute}
\end{table}

For the computational cost, our methods (RBF and Matérn) perform competitively across various datasets in Table~\ref{table:compute}. On smaller datasets such as Synthetic Data and Boston House Price, RBF and Matérn exhibit significantly lower runtimes than the baseline methods. For example, RBF requires only 0.05 seconds on Synthetic Data and 0.52 seconds on Boston House Price, while Matérn achieves the lowest runtime of 0.04 seconds on Synthetic Data. However, as the dataset size increases, Fiedler21 demonstrates a computational advantage, achieving 1.18 seconds on Sarcos, outperforming both RBF (1.74 seconds) and Matérn (1.75 seconds). Therefore, while our methods excel on smaller to medium-sized datasets, Fiedler21 shows improved efficiency on larger datasets like Sarcos.

Scalability was assessed by analyzing performance across increasingly large datasets. Both RBF and Matérn display strong scalability, maintaining relatively stable runtimes even with substantial increases in dataset size, such as on the Sarcos dataset. This consistent performance highlights their adaptability to larger datasets with minimal loss of efficiency. Fiedler21 also scales well, showing competitive performance on large datasets with a runtime of 1.18 seconds on Sarcos, making it suitable for large-scale applications. In contrast, Capone22’s runtime increases drastically with data size, reaching 343.54 seconds on Sarcos, indicating poor scalability and limiting its practicality for very large datasets. Lederer19 exhibits moderate scalability, performing well on medium-sized datasets but showing some inefficiencies as dataset size increases.

\appsubsection{Uncertainty Bound Prediction for Each Point in the Test Set} \label{A21}

% Table~\ref{table:comparison_99}, Table~\ref{table:comparison_95}, and Table~\ref{table:comparison_90} 
Table \ref{table:comparison_90} compares the performance of our method with baseline methods in terms of CWC values under 99\%, 95\%, and 90\% confidence levels. CWC is the primary metric for evaluation as it balances the trade-off between coverage and interval width. In the vast majority of cases, our RBF and Matérn kernels achieved the lowest CWC values, outperforming the baseline methods. This demonstrates that our methods maintain superior coverage while providing more compact intervals. While there were a few instances where other methods performed comparably, our method generally provided more compact uncertainty bounds and proved effective in delivering accurate uncertainty estimates across varying confidence levels.

\appsubsection{Leveraging the Training Set to Compute Uncertainty Bounds over All Unseen Test Points} \label{A22}

We computed the upper and lower bounds using only the training set, without constructing probabilistic uncertainty intervals. For the baseline methods, which rely on posterior means and scaled posterior standard deviations, we modified their approach by performing predictions on the training set and directly determining the absolute extrema as the upper and lower bounds. These bounds were subsequently validated on the unseen test set.

We evaluated our method on several real-world datasets by assessing whether the computed bounds successfully covered all unseen test points, as presented in Table~\ref{table:comparison_sup_90}. The results clearly demonstrate that, under this evaluation protocol, variations across methods are minimal, with almost negligible differences when rounded to two decimal places. Across all datasets, our RBF and Matérn kernels consistently produced tighter bounds and achieved the lowest CWC values, outperforming the baseline methods.

Notably, our method maintained full coverage (PICP = 1.00) on all datasets while attaining smaller CWC values, thereby substantiating its effectiveness in providing precise and compact extrema estimates. This illustrates the practical utility of our approach in estimating bound ranges, establishing its relevance and effectiveness in real-world applications.

\begin{table*}[h]
    \centering
    \setlength{\tabcolsep}{3pt} % Adjust column spacing for readability
    \begin{tabular}{l|ccc|ccc|ccc}
        \toprule
        99\% & PICP & NMPIW & \textbf{CWC(\( \downarrow \)}) 
               & PICP & NMPIW & \textbf{CWC(\( \downarrow \)}) 
               & PICP & NMPIW & \textbf{CWC(\( \downarrow \)})  \\ \cline{2-10}
        %  & \multicolumn{3}{c|}{Boston Housing} & \multicolumn{3}{c|}{Sarcos} & \multicolumn{3}{c}{Loa\_CO2} \\ \cline{2-10}
        % RBF(Ours)          & 0.99 & 1.01 & 1.01 & 0.99 & 0.48 & 0.48 & 1.00 & 2.54 & 0.73 \\ 
        % Matérn(Ours)       & 0.99 & 0.76 & \textbf{0.76} & 0.99 & 0.40 & \textbf{0.40} & 1.00 & 0.87 & \textbf{0.25} \\ 
        % Capone22           & 1.00 & 1.30 & 1.30 & 1.00 & 0.53 & 0.53 & 0.84 &  1.12  & 1.12 \\ 
        % Fiedler21          & 1.00 & 3.46 & 3.46 & 1.00 & 1.42 & 1.42 & 1.00 & 14.89 & 14.89 \\ 
        % Lederer19          & 1.00 & 1.47 & 1.47 & 1.00 & 0.56 & 0.56 & 1.00 &  1.34 & 1.34 \\ \hline
        & \multicolumn{3}{c|}{Synthetic Data} & \multicolumn{3}{c|}{BH (Boston Housing)} & \multicolumn{3}{c}{Sarcos} \\ \cline{2-10}
                RBF (Ours) & 1.00 & 1.80 & 1.80 & 0.99 & 1.01 & 1.01 & 1.00 & 0.48 & 0.48 \\
        Matérn (Ours) & 1.00 & 1.80 & \textbf{1.80} & 0.99 & 0.76 & \textbf{0.76} & 1.00 & 0.40 & \textbf{0.40} \\
        Capone22 & 1.0 & 5.30 & 5.30 & 1.00 & 1.30 & 1.30 & 1.00 & 0.53 & 0.53 \\
        Fiedler21 & 0.98 & 1.49 & 3.95 & 1.00 & 3.46 & 3.46 & 1.00 & 1.42 & 1.42 \\
        Lederer19 & 1.00 & 3.63 & 3.63 & 1.00 & 1.47 & 1.47 & 1.00 & 0.56 & 0.56 \\
        CI & 0.95 & 5.34 & 37.19 & 
0.94 & 0.47 & 5.95 &
0.94 & 0.18 & 2.03 \\ 
        \hline
    %       & \multicolumn{3}{c|}{USGS Earthquake} & \multicolumn{3}{c|}{Loa\_CO2} & \multicolumn{3}{c}{Airfoil Self-Noise} \\ \cline{2-10} 
    % RBF(Ours)       & 1.00   & 2.69   & \textbf{2.69}         & 1.00   & 0.81   & 0.81                   & 1.00   & 1.66      & \textbf{1.66}                   \\
    % Matérn(Ours)    & 1.00   & 2.76   & 2.76                  & 1.00   & 0.24   & \textbf{0.24}          & 1.00   & 1.80      & 1.80                   \\
    % Capone22        & 1.00   & 8.41   & 8.41                  & 0.84   & 1.12   & 1761.85                & 1.00   & 5.19      & 5.19                   \\
    % Fiedler21       & 1.00   & 3.26   & 3.26                  & 1.00   & 3.71   & 3.71                   & 1.00   & 5.98      & 5.98                   \\
    % Lederer19       & 1.00   & 4.23   & 4.23                  & 1.00   & 0.55   & 0.55                   & 1.00   & 2.90      & 2.90                   \\ \hline
    & \multicolumn{3}{c|}{USGS Earthquake} & \multicolumn{3}{c|}{Loa\_CO2} & \multicolumn{3}{c}{Auto-mpg} \\ \cline{2-10}
RBF(Ours)       & 1.00   & 2.69   & \textbf{2.69}         & 1.00   & 0.81   & 0.81                   & 1.00   & 1.09      & 1.09                   \\
Matérn(Ours)    & 1.00   & 2.76   & 2.76                  & 1.00   & 0.24   & \textbf{0.24}          & 1.00   & 0.84      & \textbf{0.84}                   \\
Capone22        & 1.00   & 8.41   & 8.41                  & 0.84   & 1.12   & 1761.85                & 0.93   & 0.36      & 6.85                   \\
Fiedler21       & 1.00   & 3.26   & 3.26                  & 1.00   & 3.71   & 3.71                   & 1.00   & 1.39      & 1.39                   \\
Lederer19       & 1.00   & 4.23   & 4.23                  & 1.00   & 0.55   & 0.55                   & 1.00   & 50.03     & 50.03 
\\ 
Bayesian CI & 0.96 & 1.19 & 6.13 & 
0.94 & 0.55 & 7.31 & 
0.96 & 0.68 & 3.51
\\
\bottomrule
%     \end{tabular}
% \caption{Comparison of our Methods against Baselines on Synthetic and Real-world Datasets at 99\% Uncertainty Bounds.}
%     \label{table:comparison_99}
% \end{table*}
%
% \begin{table*}[h]
%     \centering
%     \setlength{\tabcolsep}{5pt} % Adjust column spacing for readability
%     %\renewcommand{\arraystretch}{0.9} 
%     \begin{tabular}{l|ccc|ccc|ccc}
%         \hline
        95\% & PICP & NMPIW & CWC(\( \downarrow \)) 
               & PICP & NMPIW & CWC(\( \downarrow \)) 
               & PICP & NMPIW & CWC(\( \downarrow \))  \\ \cline{2-10}
        & \multicolumn{3}{c|}{Synthetic Data} & \multicolumn{3}{c|}{BH (Boston Housing)} & \multicolumn{3}{c}{Sarcos} \\ \cline{2-10}
      RBF (Ours) & 1.00 & 1.50 & 1.50 & 0.99 & 0.84 & 0.84 & 1.00 & 0.40 & 0.40 \\
        Matérn (Ours) & 1.00 & 1.50 & 1.50 & 0.97 & 0.64 & \textbf{0.64} & 0.99 & 0.33 & \textbf{0.33} \\
        Capone22 & 0.94 & 2.65 & 7.02 & 1.00 & 1.28 & 1.28 & 1.00 & 0.51 & 0.51 \\
        Fiedler21 & 0.98 & 1.49 & \textbf{1.49} & 1.00 & 3.46 & 3.46 & 1.00 & 1.42 & 1.42 \\
        Lederer19 & 1.00 & 3.56 & 3.56 & 1.00 & 1.35 & 1.35 & 1.00 & 0.50 & 0.50 \\
        Bayesian CI & 0.89 & 4.07 & 69.02 &
0.90 & 0.36 & 4.94 & 
0.89 & 0.14 & 3.44 \\
        \hline
%          & \multicolumn{3}{c|}{USGS Earthquake} & \multicolumn{3}{c|}{Loa\_CO2} & \multicolumn{3}{c}{Airfoil Self-Noise} \\ \cline{2-10} 
% RBF (Ours)      & 1.00   & 2.25   & 2.25                  & 1.00   & 0.68   & 0.68                   & 1.00   & 1.41   & \textbf{1.41}                   \\
% Matérn (Ours)   & 1.00   & 2.25   & \textbf{2.25}         & 1.00   & 0.20   & \textbf{0.20}          & 1.00   & 1.50   & 1.50                   \\
% Capone22        & 1.00   & 8.41   & 8.41                  & 0.84   & 1.12   & 239.41                 & 1.00   & 5.19   & 5.19                   \\
% Fiedler21       & 1.00   & 3.26   & 3.26                  & 1.00   & 3.71   & 3.71                   & 1.00   & 5.98   & 5.98                   \\
% Lederer19       & 1.00   & 4.13   & 4.13                  & 1.00   & 0.53   & 0.53                   & 1.00   & 2.81   & 2.81                   \\ \hline
& \multicolumn{3}{c|}{USGS Earthquake} & \multicolumn{3}{c|}{Loa\_CO2} & \multicolumn{3}{c}{Auto-mpg} \\ \cline{2-10}
RBF (Ours)      & 1.00   & 2.25   & 2.25                  & 1.00   & 0.68   & 0.68                   & 1.00   & 0.91   & 0.91                  \\
Matérn (Ours)   & 1.00   & 2.25   & \textbf{2.25}         & 1.00   & 0.20   & \textbf{0.20}          & 1.00   & 0.70   & \textbf{0.70}                   \\
Capone22        & 1.00   & 8.41   & 8.41                  & 0.84   & 1.12   & 239.41                 & 0.91   & 0.33   & 3.23                   \\
Fiedler21       & 1.00   & 3.26   & 3.26                  & 1.00   & 3.71   & 3.71                   & 1.00   & 1.39   & 1.39                   \\
Lederer19       & 1.00   & 4.13   & 4.13                  & 1.00   & 0.53   & 0.53                   & 1.00   & 48.42  & 48.42                   \\ 
Bayesian CI & 0.93 & 0.91 & 3.00 & 
0.91 & 0.42 & 3.30 & 
0.93 & 0.52 & 2.34 \\
\bottomrule
%     \end{tabular}
% \caption{Comparison of our Methods against Baselines on Synthetic and Real-world Datasets at 95\% Uncertainty Bounds.}
%     \label{table:comparison_95}
% \end{table*}
%
% \begin{table*}[h]
%     \centering
%     \setlength{\tabcolsep}{5pt} % Adjust column spacing for readability
%     % \renewcommand{\arraystretch}{0.9} 
%     \begin{tabular}{l|ccc|ccc|ccc}
%         \hline
        90\% & PICP & NMPIW & CWC(\( \downarrow \)) 
               & PICP & NMPIW & CWC(\( \downarrow \)) 
               & PICP & NMPIW & CWC(\( \downarrow \))  \\ \cline{2-10}
        & \multicolumn{3}{c|}{Synthetic Data} & \multicolumn{3}{c|}{BH (Boston Housing)} & \multicolumn{3}{c}{Sarcos} \\ \cline{2-10}
RBF(Ours)       & 0.98   & 1.35   & 1.35                  & 0.98   & 0.76   & 0.76                   & 1.00   & 0.36   & 0.36                   \\
Matérn(Ours)    & 1.00   & 1.35   & \textbf{1.35}         & 0.96   & 0.57   & \textbf{0.57}          & 0.98   & 0.30   & \textbf{0.30}          \\
Capone22        & 0.94   & 2.63   & 2.63                  & 1.00   & 1.18   & 1.18                   & 1.00   & 0.50   & 0.50                   \\
Fiedler21       & 0.98   & 1.49   & 1.49                  & 1.00   & 3.46   & 3.46                   & 1.00   & 1.42   & 1.42                   \\
Lederer19       & 1.00   & 3.53   & 3.53                  & 1.00   & 1.41   & 1.41                   & 1.00   & 0.49   & 0.49                   \\ 
Bayesian CI & 0.83 & 3.41 & 113.08 & 
0.86 & 0.30 & 2.66 & 
0.83 & 0.12 & 3.22 \\
        \hline
%          & \multicolumn{3}{c|}{USGS Earthquake} & \multicolumn{3}{c|}{Loa\_CO2} & \multicolumn{3}{c}{Airfoil Self-Noise} \\ \cline{2-10} 
% RBF(Ours)       & 1.00   & 2.03   & 2.03                  & 1.00   & 0.61   & 0.61                   & 1.00   & 1.25   & 1.25                   \\
% Matérn(Ours)    & 1.00   & 2.03   & 2.03                  & 1.00   & 0.18   & \textbf{0.18}          & 1.00   & 1.27   & \textbf{1.27}                   \\
% Capone22        & 1.00   & 8.41   & 8.41                  & 0.84   & 1.12   & 20.68                  & 1.00   & 5.19   & 5.19                   \\
% Fiedler21       & 1.00   & 3.26   & 3.26                  & 1.00   & 3.71   & 3.71                   & 1.00   & 5.98   & 5.98                   \\
% Lederer19       & 1.00   & 4.07   & 4.07                  & 1.00   & 0.52   & 0.52                   & 1.00   & 2.77   & 2.77                   \\ \hline
& \multicolumn{3}{c|}{USGS Earthquake} & \multicolumn{3}{c|}{Loa\_CO2} & \multicolumn{3}{c}{Auto-mpg} \\ \cline{2-10}
RBF(Ours)       & 1.00   & 2.03   & 2.03                  & 1.00   & 0.61   & 0.61                   & 1.00   & 0.82   & 0.82                   \\
Matérn(Ours)    & 1.00   & 2.03   & \textbf{2.03}       & 1.00   & 0.18   & \textbf{0.18}          & 1.00   & 0.63   & \textbf{0.63}                   \\
Capone22        & 1.00   & 8.41   & 8.41                  & 0.84   & 1.12   & 20.68                  & 0.86   & 0.28   & 1.95                   \\
Fiedler21       & 1.00   & 3.26   & 3.26                  & 1.00   & 3.71   & 3.71                   & 1.00   & 1.39   & 1.39                   \\
Lederer19       & 1.00   & 4.07   & 4.07                  & 1.00   & 0.52   & 0.52                   & 1.00   & 47.70  & 47.70                   \\ 
Bayesian CI & 0.88 & 0.76 & 2.82 & 
0.88 & 0.35 & 1.54 & 
0.89 & 0.44 & 0.93 \\
\bottomrule
    \end{tabular}
        \caption{Comparison of our methods against baselines on synthetic and real-world datasets at 99\%, 95\%, 90\% confidence levels for the test-point-specific bounds.}
    \label{table:comparison_90}
\end{table*}

\appsubsection{Real-World Data}  \label{A23}

Traditional methods typically rely on the entire kernel function to compute the mean of the data points. While effective in many cases, this approach struggles to handle strong local correlations because it does not account for the varying relationships between data points in localized regions. In contrast, the chaining method is particularly well-suited for such scenarios. It groups data within highly correlated regions by defining successive approximation layers. Each layer progressively refines the approximation, improving the accuracy of the estimate while controlling the error. This approach allows the chaining method to capture local variations more effectively, making it a robust choice for handling complex data structures.

\begin{table*}[h]
    \centering
    \setlength{\tabcolsep}{3pt} % Adjust column spacing for readability
    \renewcommand{\arraystretch}{0.9} 
    \begin{tabular}{l|ccc|ccc|ccc}
        \toprule
         & PICP & NMPIW & CWC(\( \downarrow \)) 
               & PICP & NMPIW & CWC(\( \downarrow \)) 
               & PICP & NMPIW & CWC(\( \downarrow \))  \\ \cline{2-10}
        & \multicolumn{3}{c|}{Synthetic Data} & \multicolumn{3}{c|}{BH (Boston Housing)} & \multicolumn{3}{c}{Sarcos} \\ \cline{2-10}
RBF(Ours)       & 1.00   & 1.68   & 1.68                  & 1.00   & 1.75  & 1.75                   & 1.00   & 1.03   & 1.03                   \\
Matérn(Ours)    & 1.00   & 1.67   & \textbf{1.67}                   & 1.00   & 1.64   & \textbf{1.64}          & 1.00   & 0.78   & \textbf{0.78}          \\
Capone22        & 0.96   & 0.89 & 4.89           & 1.00   & 1.77   & 1.77                   & 1.00   & 1.18   & 1.18                   \\
Fiedler21       & 1.00   & 2.20   & 2.20                  & 1.00   & 5.04   & 5.04                   & 1.00   & 2.31   & 2.31                   \\
Lederer19       & 1.00   & 2.02   & 2.02                  & 1.00   & 1.78   & 1.78                  & 1.00   & 1.34   & 1.34                   \\ \hline

& \multicolumn{3}{c|}{USGS Earthquake} & \multicolumn{3}{c|}{Loa\_CO2} & \multicolumn{3}{c}{Auto-mpg} \\ \cline{2-10}
RBF(Ours)       & 1.00   & 2.59   & 2.59                  & 1.00   & 1.70   & \textbf{1.70}                  & 1.00   & 3.06   & 3.06                   \\
Matérn(Ours)    & 1.00   & 2.56   & \textbf{2.56}         & 1.00   & 2.08   & 2.08                   & 1.00   & 3.24   & 3.24                   \\
Capone22        & 1.00   & 4.62   & 4.62                  & 1.00   & 2.07   & 2.07                   & 1.00   & 2.81   & \textbf{2.81}                   \\
Fiedler21       & 1.00   & 2.57   & 2.57                  & 1.00   & 4.56  & 4.56                  & 1.00   & 7.16   & 7.16                   \\
Lederer19       & 1.00   & 3.07   & 3.07                & 1.00   & 1.73   & 1.73                   & 1.00   & 57.22  & 57.22                   \\ 
\bottomrule
    \end{tabular}
            \caption{Comparison of our method against baselines on real-world datasets of the expected upper and lower bounds for all unseen test points.}
    \label{table:comparison_sup_90}
\end{table*}

\begin{figure}[h]
\begin{center}
\fbox{%
       \begin{minipage}[b]{0.46\textwidth}
            \includegraphics[width=\linewidth]{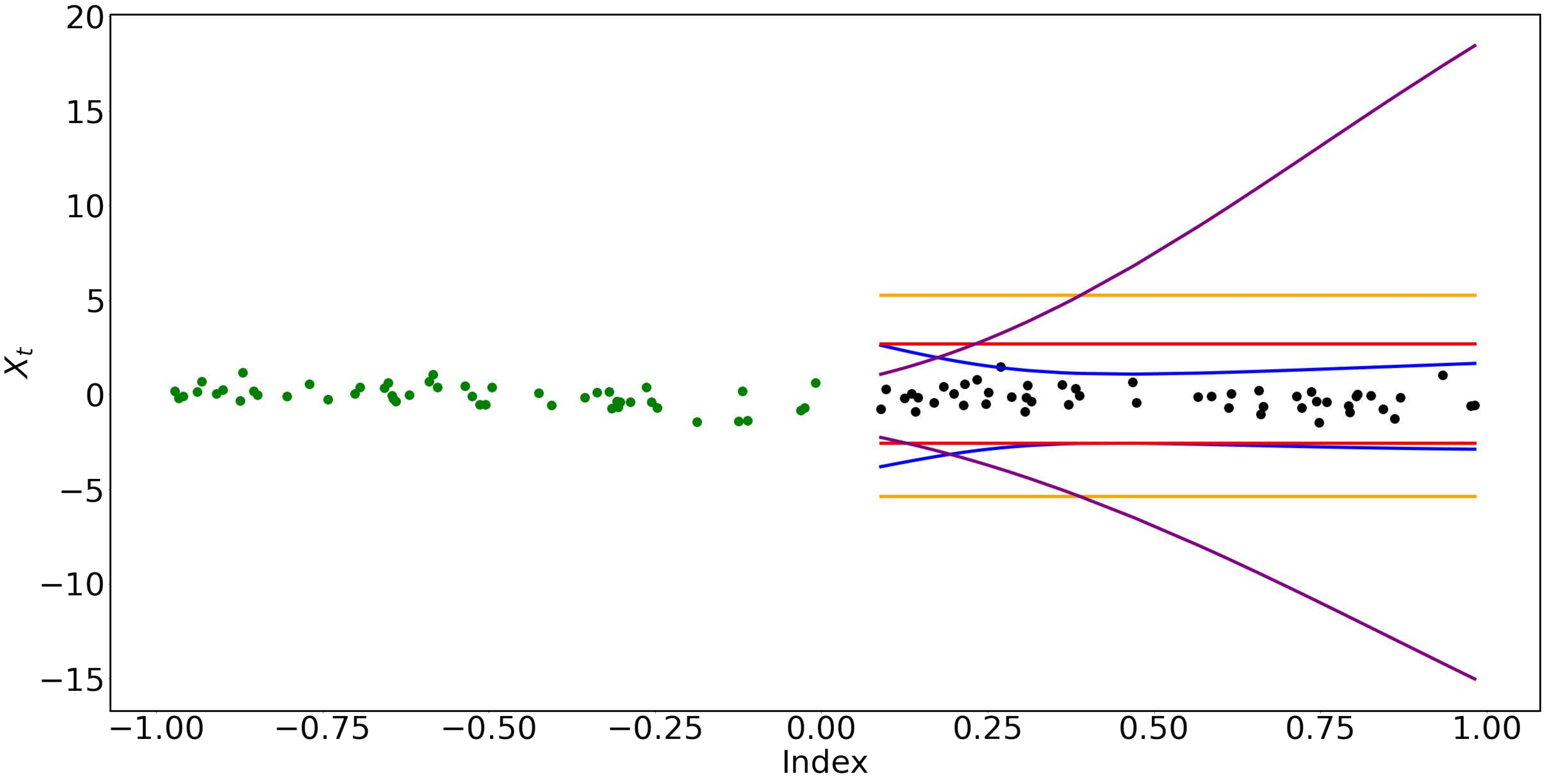}
            \centerline{(a) Synthetic Data}
        \end{minipage}
        \hfill
        \begin{minipage}[b]{0.46\textwidth}
            \includegraphics[width=\linewidth]{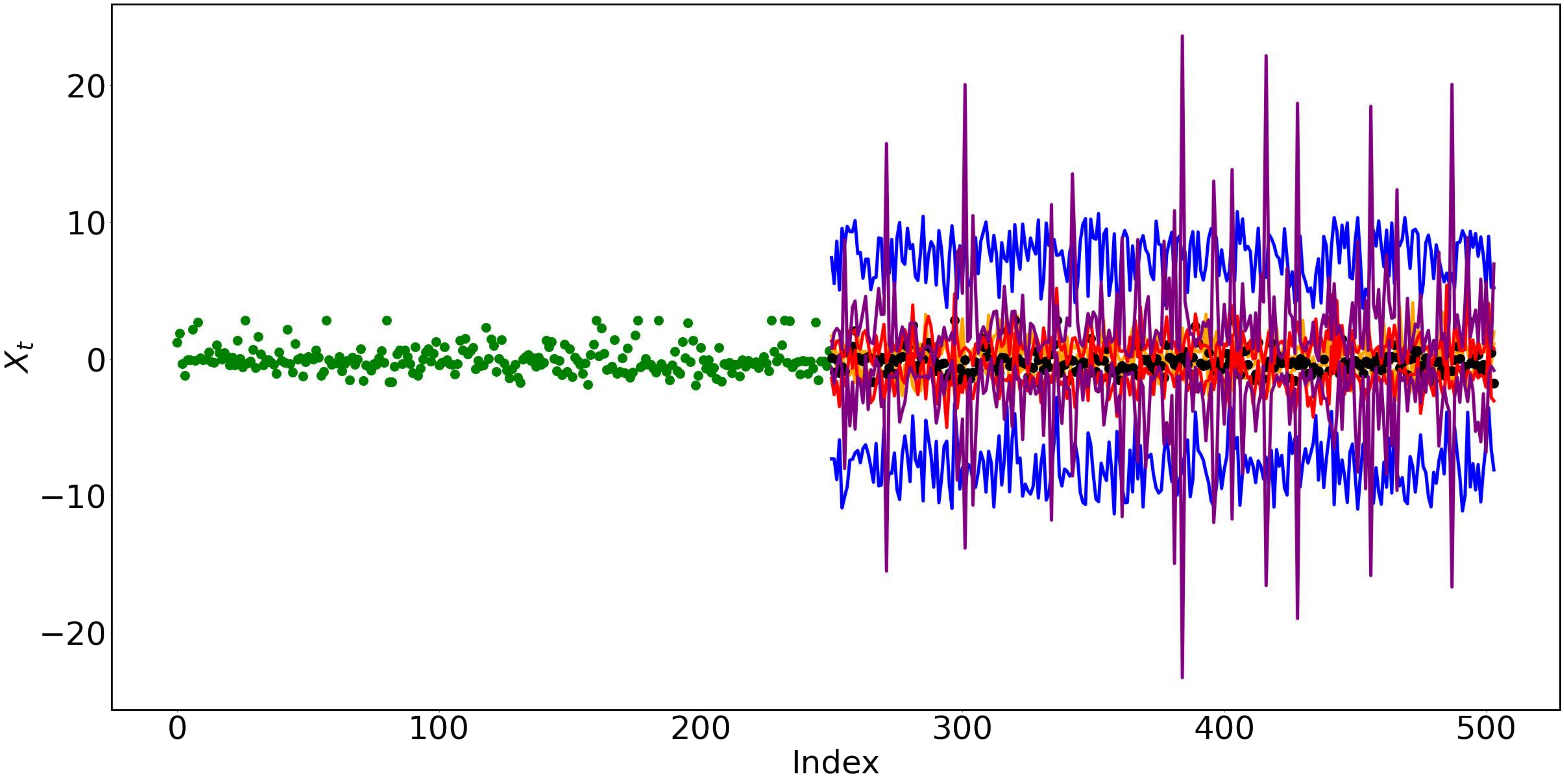}
            \centerline{(b) Boston House Price}
        \end{minipage}
}
\fbox{%
        \begin{minipage}[b]{0.46\textwidth}
            \includegraphics[width=\linewidth]{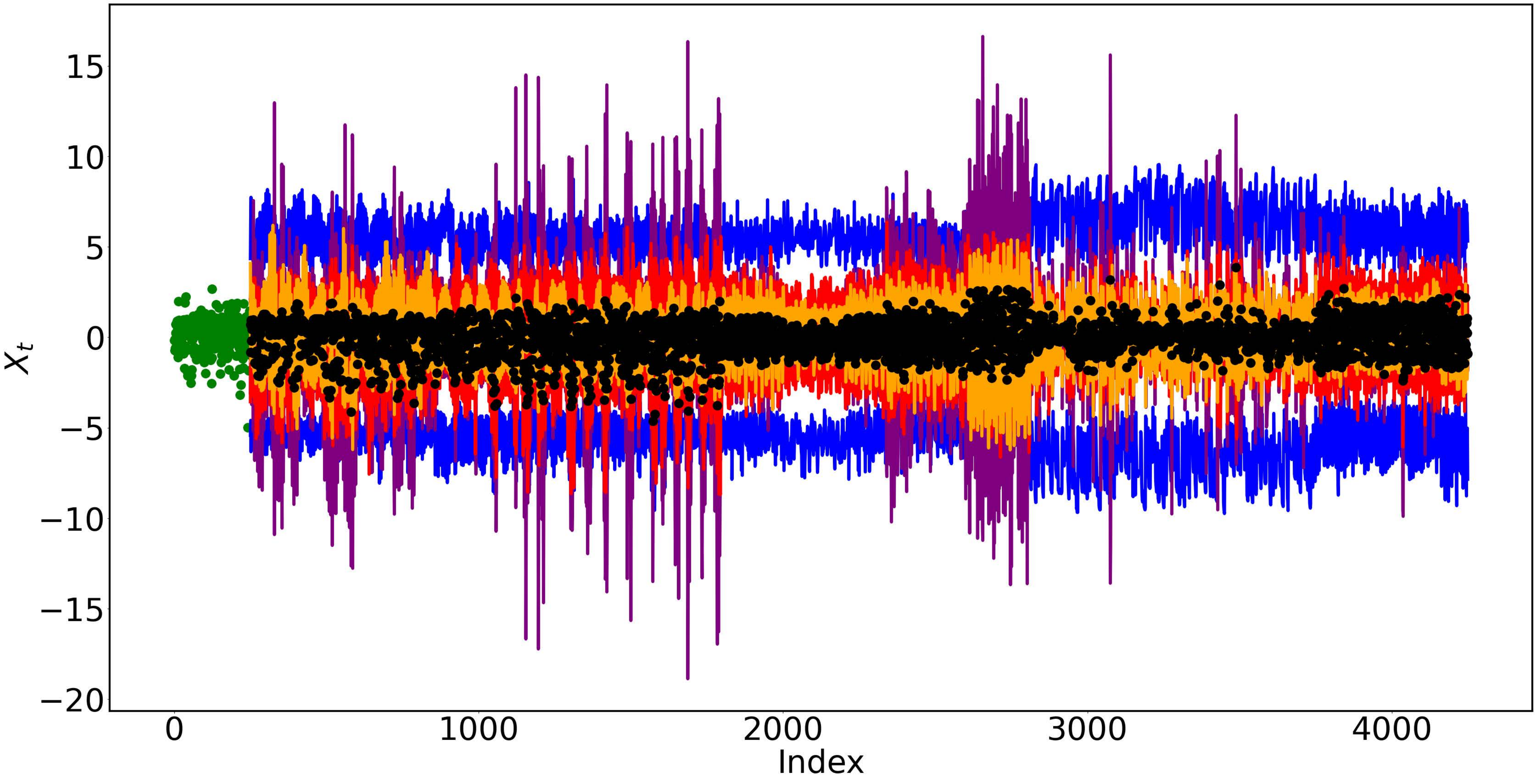}
            \centerline{(c) Sarcos}
        \end{minipage}
        \hfill
        \begin{minipage}[b]{0.46\textwidth}
            \includegraphics[width=\linewidth]{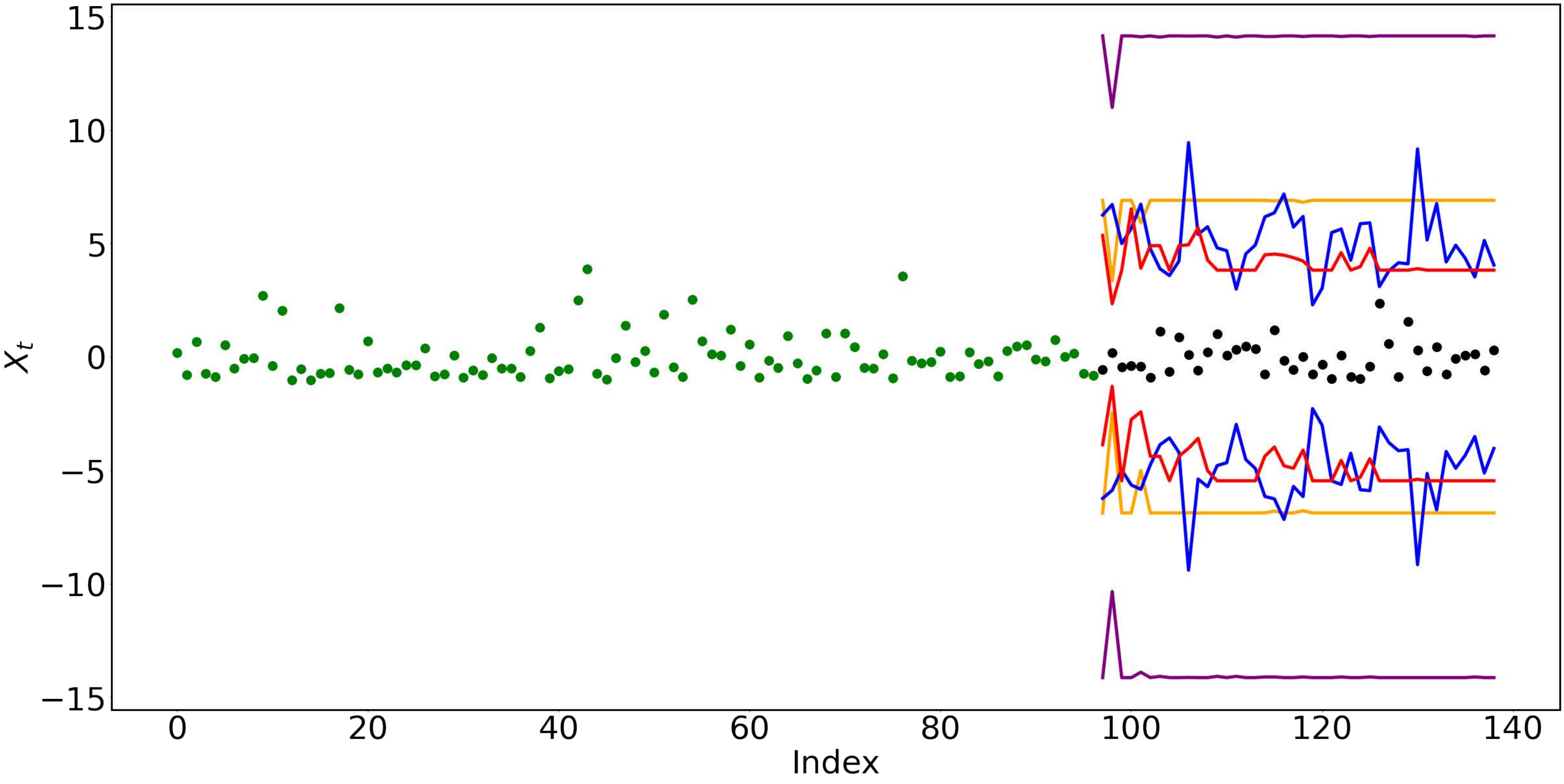}
            \centerline{(d) USGS Earthquake}
        \end{minipage}
}
\fbox{%
        \begin{minipage}[b]{0.46\textwidth}
            \includegraphics[width=\linewidth]{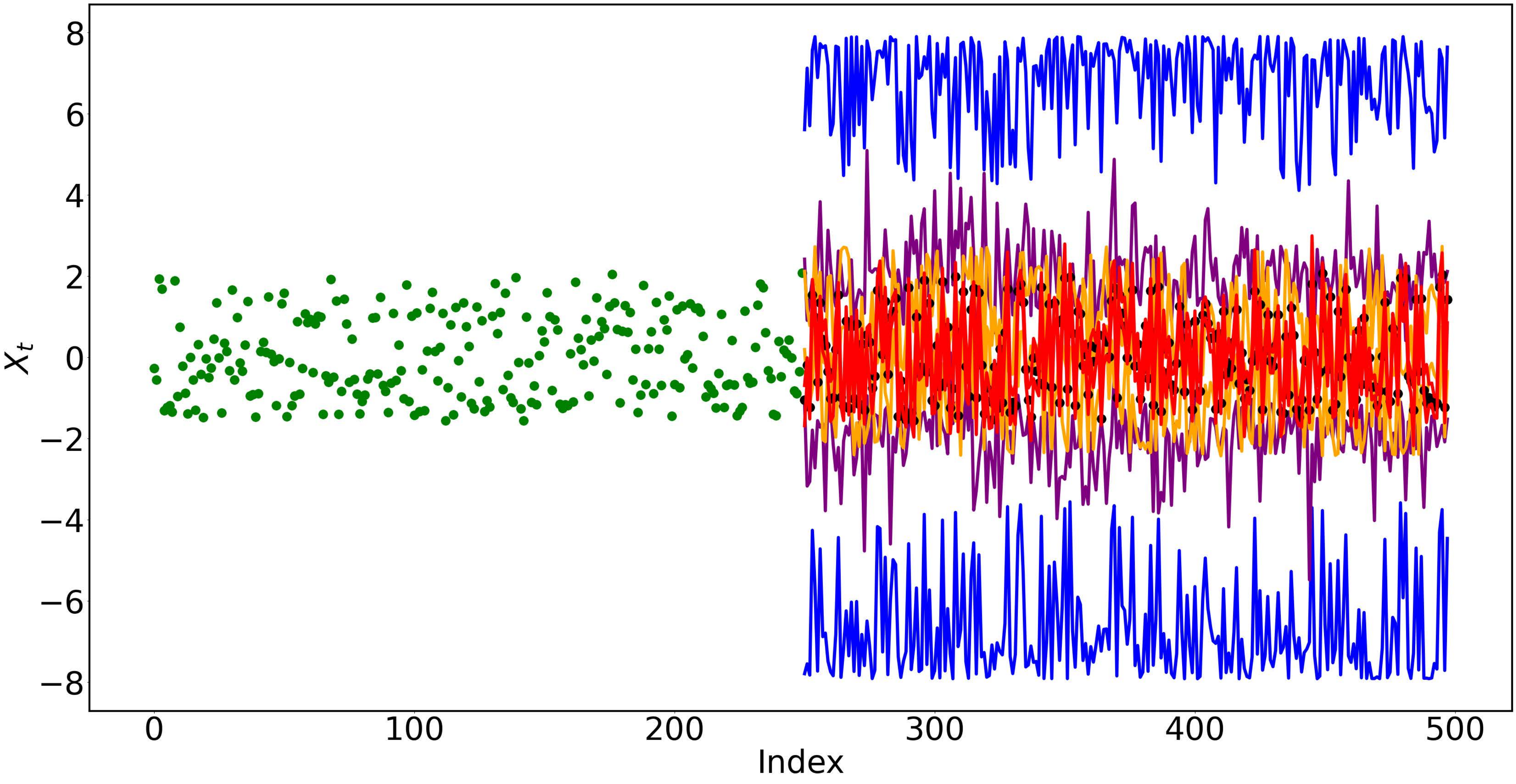}
            \centerline{(e) Loa\_CO2}
        \end{minipage}
        \hfill
        \begin{minipage}[b]{0.46\textwidth}
            \includegraphics[width=\linewidth]{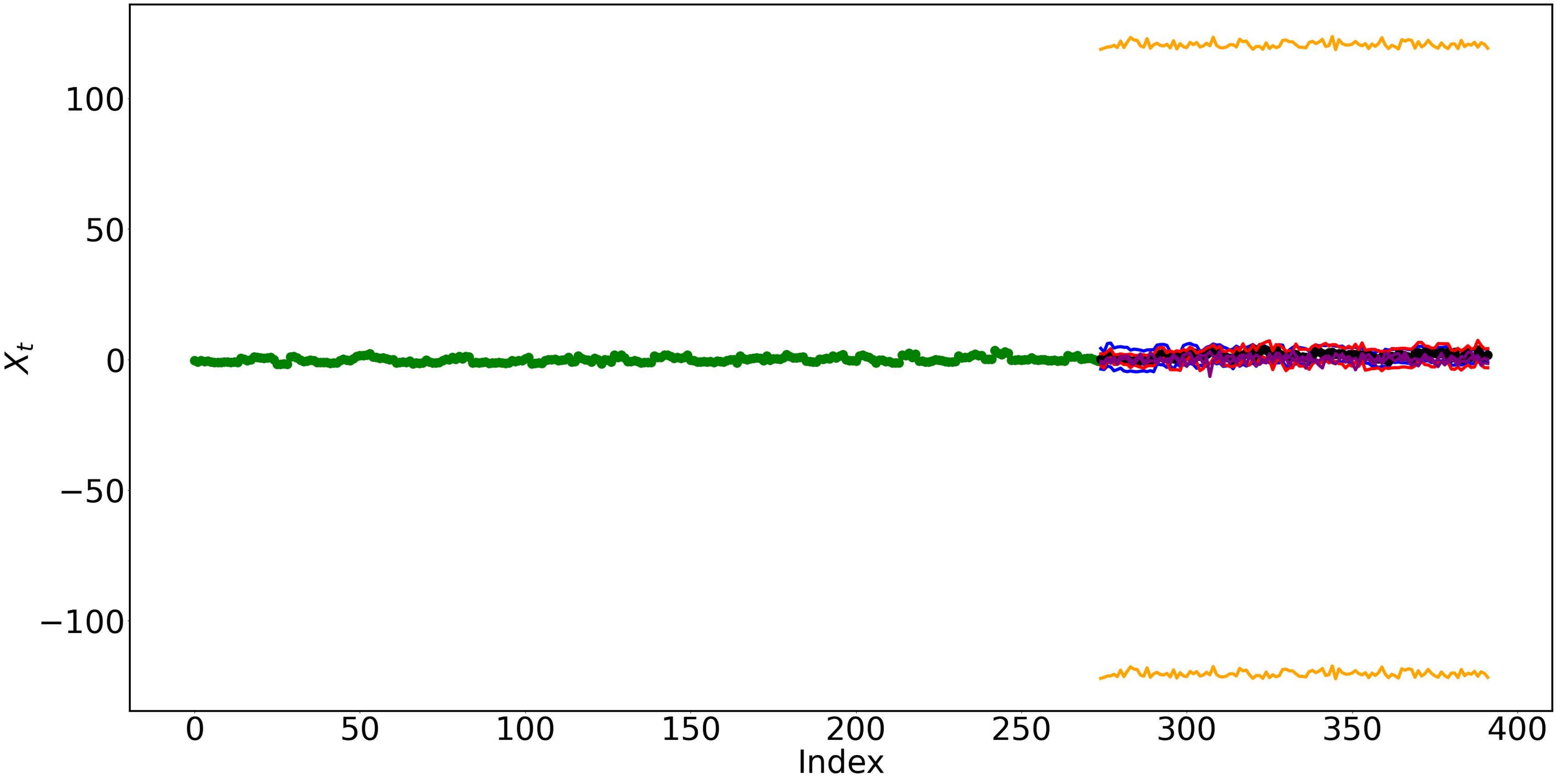}
            \centerline{(f) Auto-mpg}
        \end{minipage}
}
\end{center}
\caption{Comparison of our method with baselines for the test-point-specific bounds. The training set is in green, the test set in black, Lederer19 in orange, Fiedler21 in blue, Capone22 in purple, and our method in red.}
\label{figure:comparison}
\end{figure}

Moreover, for high-dimensional and complex datasets such as the Boston House Price and Sarcos datasets, which exhibit concentrated, high-dimensional characteristics, traditional methods face challenges. In these datasets, the distances between points can vary significantly and in highly nonlinear ways, making it difficult for standard methods to accurately capture the underlying structure. The chaining method, on the other hand, adapts more efficiently to these complexities by focusing on local approximations. This helps prevent the accumulation of errors, resulting in tighter bounds and improved performance, especially for datasets with intricate, high-dimensional relationships.

Figure~\ref{figure:comparison} illustrates this point. In these datasets, the bounds obtained by Capone22 are generally wider than those produced by our chaining method, particularly evident in (d). The bounds generated by Fiedler21 are notably wider than those from our chaining method. Lederer19 produces tighter bounds in most cases, but in (f) with the autompg dataset, the bounds are much wider. This suggests that the chaining method excels in controlling error and delivering more precise bounds, particularly in highly concentrated data environments.

\appsubsection{Statistical Significance}  \label{A24}

% To evaluate the statistical significance of our method compared to the baseline models (Fiedler21, Capone22, and Lederer19), we performed paired t-tests on the CWC metric. For each real-world dataset (Boston House Price, Lederer, USGS Earthquake, Loa CO2 and Airfoil Self-Noise), the training and testing sets were randomly sampled 100 times. In each trial, the models were trained on the training set and evaluated on the testing set, resulting in 100 independent CWC values for each model.

To evaluate the statistical significance of our method compared to the baseline models (Fiedler21, Capone22, and Lederer19), we performed permutation t-tests on the CWC (Coverage Width Combination) metric for two different experiments. The first experiment, which includes conventional uncertainty bounds prediction, is shown in Table~\ref{table:t-test-comparison}, while the second experiment, predicting the unseen test set using only the training set, is shown in Table~\ref{table:t-test-comparison-sup}.

\begin{table}[h]
\centering
\setlength{\tabcolsep}{10pt} % Adjust column spacing for readability
\renewcommand{\arraystretch}{1.1} 
\begin{tabular}{c|ccc}
\hline
BH (Boston Housing) & t-Statistic & p-Value & Statistical Significance \\ \hline
Our Method vs Capone22 & -1.0326  & $<$0.001 & **                       \\ 
Our Method  vs   Fiedler21 & -136.3220    & $<$0.001   & **                       \\ 
Our Method  vs Lederer19 & -12.2456    &  $<$0.001   & **                       \\ 
Our Method  vs Bayesian CI & -3.7533   &  $<$0.001   & **                       \\ 
\hline
Sarcos & t-Statistic & p-Value & Statistical Significance \\ \hline
Our Method vs Capone22 & -2.9995 & $<$0.001 & **                       \\ 
Our Method  vs  Fiedler21 & -86.5386    & $<$0.001   & **                       \\ 
Our Method  vs Lederer19 & -5.1038   &  $<$0.001   & **                       \\ 
Our Method  vs Bayesian CI & -3.4476   &  $<$0.001   & **                       \\ 
\hline
USGS Earthquake & t-Statistic & p-Value & Statistical Significance \\ \hline
Our Method vs Capone22 & -17.4760  & $<$0.001 & **                       \\ 
Our Method  vs   Fiedler21 & -0.9292   & 0.0242  &  *                      \\ 
Our Method  vs Lederer19 & -1.6369 & 0.0027   & **                       \\ 
Our Method  vs Bayesian CI &  -2.2163  &  0.0023   & **                       \\ 
\hline
Loa\_CO2 & t-Statistic & p-Value & Statistical Significance \\ \hline
Our Method vs Capone22 & -2.8560 & $<$0.001 & **                       \\ 
Our Method  vs  Fiedler21 & -164.8383    & $<$0.001   & **                       \\ 
Our Method  vs Lederer19 & -39.9086   &  $<$0.001   & **                       \\ 
Our Method  vs Bayesian CI & -1.1132   &  $<$0.001   & **                       \\ 
\hline
Autompg & t-Statistic & p-Value & Statistical Significance \\ \hline
Our Method vs Capone22 & -3.2506 & $<$0.001 & **                       \\ 
Our Method  vs  Fiedler21 & -11.7752   & $<$0.001   & **                       \\ 
Our Method  vs Lederer19 & -68.3016  &  $<$0.001   & **                       \\ 
Our Method  vs Bayesian CI & -3.6030   &  $<$0.001   & **                       \\ 
\hline
\end{tabular}
\caption{Permutation t-Test comparisons of our method against baselines on real-world data for the test-point-specific bounds.  (** indicates $p < 0.01$; * indicates $p < 0.05$; negative t-statistics indicate that our model performs better than the compared model, as lower CWC values are preferable.)}
\label{table:t-test-comparison}
\end{table}

A permutation t-test is a non-parametric statistical test used to assess whether the performance difference between two models is statistically significant. Unlike traditional t-tests, which assume that the data follows a specific distribution (e.g., normal distribution), permutation tests do not rely on such assumptions, making them suitable for evaluating models when the data distribution is unknown or non-normal. In this context, the permutation t-test enables a more flexible comparison between our method and the baseline models without imposing strict distributional assumptions. By permuting the labels of the data points, the test generates a distribution of differences under the null hypothesis (i.e., that there is no significant difference between the models). This method is particularly beneficial for small datasets or non-normal distributions, providing a robust and reliable measure of statistical significance.

In each trial, the models were trained on the training set and evaluated on the testing set, resulting in 100 independent CWC values for each model. The t-tests were then applied to these CWC values to compare our method with the baselines in Table~\ref{table:t-test-comparison} and Table~\ref{table:t-test-comparison-sup}. This approach ensures that the comparisons account for the variability introduced by the random splits while maintaining the dependency between paired observations. We use \( p < 0.01 \) to indicate high statistical significance, and \( p < 0.05 \) to denote moderate statistical significance. 

Given that lower CWC values correspond to better performance, negative t-statistics suggest that our method generally outperforms the baseline. In the vast majority of cases, our method demonstrates statistically significant improvement over the baseline. However, the USGS Earthquake dataset does not exhibit such significant differences, which can be attributed to the relatively small sample size and limited number of features in this dataset. These factors likely hinder the statistical power of the test, resulting in a lack of significant findings. It is important to note that when sample sizes are small and feature richness is limited, statistical tests often lack the sensitivity needed to detect significant differences, even when performance improvements are evident.

\begin{table}[h]
\centering
\setlength{\tabcolsep}{10pt} % Adjust column spacing for readability
\renewcommand{\arraystretch}{1.1} 
\begin{tabular}{c|ccc}
\hline
BH (Boston Housing) & t-Statistic & p-Value & Statistical Significance \\ \hline
Our Method vs Capone22 & -6.5777 & $<$0.001 & **                       \\ 
Our Method  vs   Fiedler21 & -106.5292   & $<$0.001   & **                       \\ 
Our Method  vs Lederer19 & -3.1341   &  0.0013   & **                       \\ \hline
Sarcos & t-Statistic & p-Value & Statistical Significance \\ \hline
Our Method vs Capone22 & -16.2167 & $<$0.001 & **                       \\ 
Our Method  vs  Fiedler21 & -39.0211    & $<$0.001   & **                       \\ 
Our Method  vs Lederer19 & -10.7331  &  $<$0.001   & **                       \\ \hline
USGS Earthquake & t-Statistic & p-Value & Statistical Significance \\ \hline
Our Method vs Capone22 & -21.1489  & $<$0.001 & **                       \\ 
Our Method  vs   Fiedler21 & -8.8898   & $<$0.001  &   **                        \\ 
Our Method  vs Lederer19 & -0.8439 & 0.2041   &                      \\ \hline
Loa\_CO2 & t-Statistic & p-Value & Statistical Significance \\ \hline
Our Method vs Capone22 & -13.1667 & $<$0.001 & **                       \\ 
Our Method  vs  Fiedler21 & -80.8352   & $<$0.001   & **                       \\ 
Our Method  vs Lederer19 & -6.5235   &  $<$0.001   & **                       \\ \hline
Autompg & t-Statistic & p-Value & Statistical Significance \\ \hline
Our Method vs Capone22 & 13.1493 & $<$0.001 & **                       \\ 
Our Method  vs  Fiedler21 & -28.0447   & $<$0.001   & **                       \\ 
Our Method  vs Lederer19 & -6.1033  &  $<$0.001   & **                       \\ \hline
\end{tabular}
\caption{Permutation t-test comparisons of our method against baselines on real-world Data for the bound for all unseen test points.  (** indicates $p < 0.01$; * indicates $p < 0.05$; negative t-statistics indicate that our model performs better than the compared model, as lower CWC values are preferable.)}
\label{table:t-test-comparison-sup}
\end{table}

\end{document}